\documentclass[journal]{IEEEtai}
\IEEEoverridecommandlockouts

\usepackage{cite}
\usepackage{amsmath,amssymb,amsfonts}
\usepackage{amsthm}
\newtheorem{proposition}{Proposition}
\newtheorem{theorem}{Theorem}
\newtheorem{lemma}{Lemma}
\usepackage{algorithm}
\usepackage{algorithmic}
\usepackage{graphicx}
\usepackage{textcomp}
\usepackage{xcolor}
\usepackage{booktabs}
\usepackage{array}
\usepackage{url}
\usepackage{multirow}
\usepackage{arydshln}
\usepackage{soul}
\usepackage{hyperref}
\hypersetup{
    colorlinks = true,
    linkcolor  = blue,
    citecolor  = blue,
    urlcolor   = blue,
    pdfencoding = auto
}
\usepackage{silence}
\newcommand{\indicator}{\mathbf{1}}

\def\BibTeX{{\rm B\kern-.05em{\sc i\kern-.025em b}\kern-.08em
    T\kern-.1667em\lower.7ex\hbox{E}\kern-.125emX}}

\begin{document}

\title{FedRAW: Preserving Rare-Label Influence in Asynchronous Federated Learning}

\author{

\IEEEauthorblockN{
Prashant Bajpai\IEEEauthorrefmark{1},
Divya Saxena\IEEEauthorrefmark{2},
Philippe Lalanda\IEEEauthorrefmark{3},
German Vega\IEEEauthorrefmark{3}
}

\IEEEauthorblockA{
\IEEEauthorrefmark{1}School of Artificial Intelligence and Data Science, Indian Institute of Technology Jodhpur, India\\
\IEEEauthorrefmark{2}Department of Computer Science and Engineering, Indian Institute of Technology Jodhpur, India\\
\IEEEauthorrefmark{3}Laboratoire d'Informatique de Grenoble (LIG), University of Grenoble Alpes, Grenoble, France\\
}

\thanks{\IEEEauthorrefmark{2}Corresponding author: divyasaxena@iitj.ac.in}

}

\maketitle{}

\begin{abstract}
Asynchronous federated learning improves scalability by updating the global model from a server-side buffer of client updates as they arrive, rather than waiting for all selected clients to finish. While efficient, this arrival-driven aggregation can silently distort representation learning under heterogeneous participation. We identify silent rarity failure, a hidden failure mode in which clients holding rare labels contribute too weakly to the global model even though its overall accuracy appears largely unaffected. This failure arises from two coupled effects: rare-label clients may submit updates less frequently when they are slower or less available, creating participation bias; and once their updates enter the buffer, standard asynchronous aggregation assigns them no compensating influence, creating aggregation bias. We propose FedRAW, a fully server-side aggregation method that preserves rare-label influence without changing local training, client objectives, or communication protocols. FedRAW combines client-level update deduplication, which prevents frequently arriving clients from repeatedly dominating the update buffer, with rare-label-aware weighting, which increases the influence of clients carrying low-coverage labels. We formalize silent rarity failure through participation and aggregation bias, and show that FedRAW increases rare-label client influence over uniform aggregation while preserving convergence. Across EMNIST Balanced, CIFAR-10, HAM10000, and ISIC-2019, FedRAW improves rare-label accuracy while preserving comparable global accuracy and adding negligible server-side computation.
\end{abstract}

\begin{IEEEImpStatement}
Asynchronous federated learning improves scalability by updating the global model from a server-side buffer as client updates arrive, rather than waiting for all selected clients to finish. However, this arrival-driven aggregation can silently suppress the
influence of clients holding low-coverage labels, while leaving global
accuracy largely unaffected. FedRAW addresses this \textit{silent rarity failure} through server-side buffer deduplication and rare-label-aware aggregation, without modifying ongoing client training or communication protocols. The approach provides a lightweight mechanism for preserving rare-label influence in asynchronous aggregation and can improve the reliability of federated models in settings where some labels have low client coverage. More broadly, this work shows that global accuracy alone can mask rare-label failures in asynchronous federated learning.
\end{IEEEImpStatement}

\begin{IEEEkeywords}
Federated Learning, Asynchronous Federated Learning, Rare-label-aware Aggregation, Label Imbalance
\end{IEEEkeywords}

\section{Introduction}

Asynchronous federated learning improves scalability by updating the global model from a server-side buffer of client updates as they arrive, rather than waiting for all selected clients to finish. By reducing the straggler bottleneck of synchronous training, arrival-driven aggregation better matches the realities of heterogeneous federated systems, in which clients differ in computation speed, communication bandwidth, and availability.

However, the same arrival-driven mechanism that improves scalability can also distort representation learning under heterogeneous participation. In practical federated systems, client speed, availability, data volume, and label coverage may be correlated: clients that are slower or less frequently available may also hold smaller, more specialized, or less frequently observed data distributions~\cite{pati2022federated}. We identify the resulting failure mode as \textbf{silent rarity failure}, where clients holding rare labels contribute too weakly to the global model even though overall accuracy appears largely unaffected. This failure arises from two coupled effects: rare-label clients may submit updates less frequently, creating \textbf{participation bias}; and once their updates enter the server-side buffer, standard asynchronous aggregation assigns them uniform or rarity-agnostic influence, creating \textbf{aggregation bias}. Critically, this is a \textit{system-induced} failure: it arises from the dynamics of asynchronous buffer aggregation rather than from the global sample distribution, and occurs even when rare-label clients hold sufficient local data to represent their classes well. Together, these effects suppress rare-label influence at the aggregation level and make rare-label degradation hard to detect from global accuracy alone.

We first expose silent rarity failure in a controlled EMNIST Balanced setting under FedFa~\cite{yi2024}, where the two biases can be isolated (Fig.~\ref{fig:motivation}): a control setting with neither rare-label concentration nor speed heterogeneity (\textbf{Exp 1}), a rare-label setting with uniform client speeds that isolates aggregation bias (\textbf{Exp 2}), and a correlated-speed setting where rare-label clients are also slower, combining both biases (\textbf{Exp 3}). Global accuracy stays nearly unchanged at 75--77\% across all three, yet rare-label accuracy drops to 16.9\% under aggregation bias alone and collapses to 2.3\% when both are present -- confirming that asynchronous FL can silently lose rare-label representations while global evaluation suggests normal convergence.

Existing FL methods address related but distinct challenges: staleness-aware asynchronous FL compensates for delayed or outdated updates, buffer-based methods improve throughput under uniform or rarity-agnostic aggregation, fairness-aware FL targets client-level or demographic parity, and long-tail methods address distributional skew through local resampling, reweighting, or distillation under synchronous participation. None of them directly addresses the coupled failure studied here, in which rare-label clients both enter the asynchronous aggregation buffer less frequently and receive insufficient influence when they do.

This leads to the central question of this work: \textit{can
asynchronous FL preserve rare-label influence while retaining
the scalability benefits of arrival-driven aggregation?}
The target is not the traditional data-level class imbalance
problem, in which one class has fewer total samples, but rarity
\textit{in terms of client coverage}, which the asynchronous
aggregation mechanism actively suppresses even when sufficient
data exists across the federation. To answer this, we propose
FedRAW, a server-side aggregation method that preserves
rare-label influence directly at the aggregation pathway,
without changing local training, client objectives, or
communication protocols.

FedRAW combines two complementary mechanisms. First, client-level update deduplication addresses participation imbalance inside the server-side buffer: when a frequently arriving client submits multiple updates, the server keeps only its most recent update rather than allowing the same client to occupy multiple buffer positions. This prevents fast clients from repeatedly dominating the aggregation pool. Second, rare-label-aware weighting addresses influence imbalance during aggregation: clients carrying low-coverage labels receive higher aggregation weight once their updates are present in the buffer. Together, these mechanisms preserve both buffer presence and aggregation influence for rare-label clients.

\begin{figure}[t]
    \centering
    \includegraphics[width=\columnwidth]{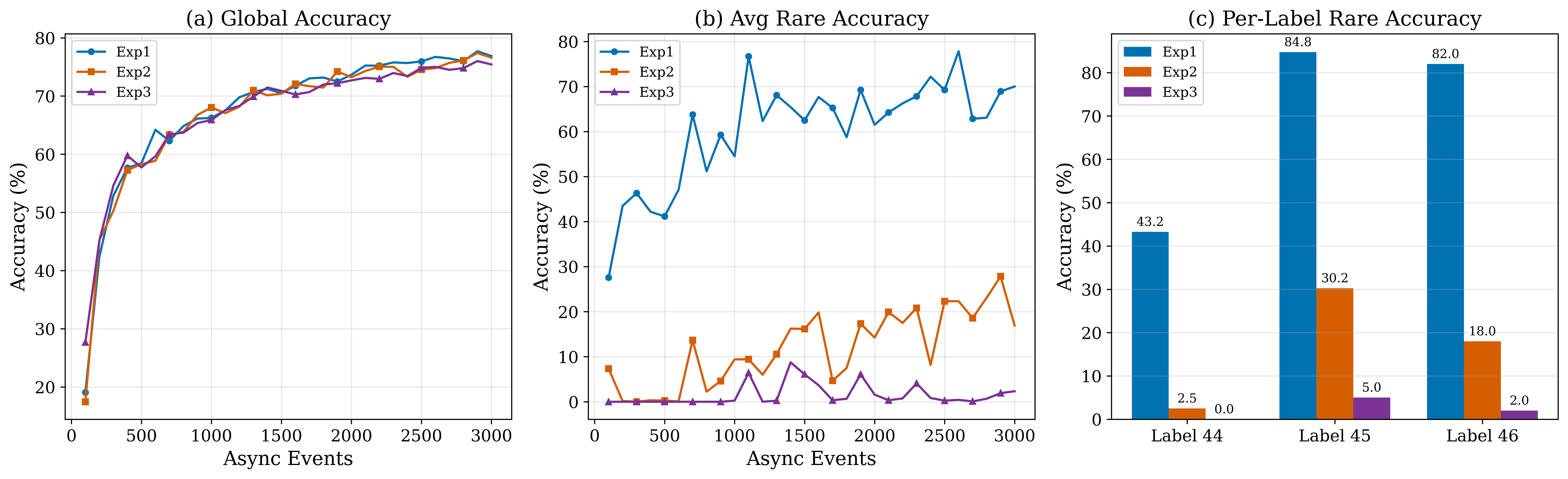}
    \caption{Silent rarity failure. \textbf{(a)}~Global accuracy
converges to $\sim$75--77\% across all three settings, masking the
failure. \textbf{(b)}~Rare-class accuracy reveals it: Exp2 collapses
to 16.9\% and Exp3 to 2.3\%. \textbf{(c)}~Per-label breakdown:
Labels 44, 45, 46 collapse to near-zero under Exp3, Label~44 to
0.0\%.}
    \label{fig:motivation}
\end{figure}

To the best of our knowledge, FedRAW is among the first
server-side asynchronous aggregation methods to explicitly
study and mitigate rare-label influence suppression caused
by the coupling of participation bias and aggregation bias
under label-coverage imbalance. Our main contributions are:
\begin{itemize}

\item We identify \textit{silent rarity failure} in buffer-based asynchronous FL, a hidden failure mode where rare-label clients contribute too weakly to the global model while overall accuracy appears unaffected. It arises from two coupled effects: rare-label clients enter the update buffer less frequently under heterogeneous participation, and once included, receive no compensating influence under uniform aggregation.

\item We propose FedRAW, a fully server-side aggregation method
that preserves rare-label influence without modifying local
training, client objectives, or communication protocols. It
combines client-level update deduplication, which prevents fast
clients from repeatedly dominating the buffer, with
rare-label-aware weighting, which amplifies the aggregation
influence of low-coverage clients, and we show that this increases
rare-label influence over uniform asynchronous aggregation.

\item We conduct extensive experiments on EMNIST Balanced,
CIFAR-10, HAM10000, and ISIC-2019, where FedRAW substantially
improves rare-label accuracy over asynchronous baselines at
comparable global accuracy and negligible server-side
computation.
\end{itemize}

\section{Related Work}

\subsection{Buffer-Based Asynchronous FL}
Asynchronous FL improves training efficiency by allowing the
server to update the global model as client updates arrive,
avoiding the synchronous waiting barrier. FedBuff~\cite{nguyen2022}
accumulates $K$ pseudo-gradients in a server-side buffer before
aggregation, while FedFa~\cite{yi2024} maintains a sliding window
of $K$ client model snapshots. Both target scalability and
convergence efficiency, but their aggregation is rarity-agnostic:
once an update enters the buffer, its influence does not depend on
whether the contributing client carries labels with low client
coverage, and frequently arriving clients may repeatedly occupy
buffer positions, reducing the effective presence of slower
clients. CA2FL~\cite{wang2024ca2fl} improves participation
coverage by maintaining a calibrated cached update delta $h_i$ for
each client, so that a global correction term
$\bar{h} = \frac{1}{N}\sum_i h_i$ lets all $N$ clients contribute
to every aggregation -- at the cost of $O(N)$ server-side memory,
each client's influence in $\bar{h}$ still remains
$1/N$.

\subsection{Long-Tail and Class Imbalance in FL}
Long-tail and class-imbalance methods in FL address distributional
skew mainly through data-side or local-training interventions:
Astraea~\cite{duan2019} uses mediator-based client rescheduling
and global data augmentation, BalanceFL~\cite{shuai2022} simulates
balanced data through knowledge distillation and feature-space
augmentation, CReFF~\cite{shang2022} corrects classifier bias by
retraining with federated features, FEDSLD~\cite{luo2022fedsld}
modifies the local objective with weighted cross-entropy for
medical imaging, and FedWeight~\cite{gao2026fedweight} reweights
the local loss by class-wise feature-space density. These methods
address sample-level or class-level imbalance but generally assume
synchronous or round-based participation, and do not model the
arrival-driven mechanism through which rare-label clients may
enter an asynchronous buffer less frequently and receive
insufficient influence once included.

\subsection{Fairness and Performance Parity in FL}
Fairness-aware FL studies disparity across clients or demographic
groups. q-FedAvg~\cite{li2020qfed} upweights high-loss clients to
improve device-level accuracy parity, and
FairFed~\cite{ezzeldin2023} adjusts aggregation weights using
local and global fairness metrics. Their objectives are not
defined in terms of label coverage, however: a client may satisfy
device-level fairness criteria while still carrying rare labels
that remain underrepresented in the global model.

\subsection{Staleness-Aware Asynchronous FL}
A separate line of asynchronous FL research focuses on staleness.
SyncFed~\cite{syncfed2025} quantifies staleness using NTP-based
timestamping and applies temporally informed aggregation weights,
and AFBS~\cite{afbs2025} clusters clients using encrypted label
distributions and selects high-value gradients to reduce
stale-update impact. These methods improve temporal reliability
but primarily ask how outdated an update is. FedRAW instead asks
whether the update comes from a client carrying low-coverage
labels, and whether such clients are structurally
underrepresented in the aggregation buffer.

\begin{table}[ht]
\centering
\caption{FedRAW positioning against related FL research categories.}
\label{tab:supp_positioning}
\small
\resizebox{\columnwidth}{!}{%
\begin{tabular}{lccccc}
\toprule
\textbf{Research category}
& \textbf{Sample-count}
& \textbf{Async}
& \textbf{Buffer}
& \textbf{Server-}
& \textbf{Rare-label} \\
& \textbf{imbalance}
& \textbf{arrival bias}
& \textbf{dominance}
& \textbf{side only}
& \textbf{influence} \\
\midrule
Buffer-based async FL
& No & Partial$^{\dagger}$ & No & Yes & No \\

Cached-update async FL
& Partial & Partial$^{\parallel}$ & Partial & Yes & No \\

Long-tail / class-imbalance FL
& Yes & No & No & Usually no & No$^{\ddagger}$ \\

Fairness / performance-parity FL
& Indirect & No & No & Partial & No$^{\S}$ \\

Staleness-aware async FL
& No & Partial$^{\P}$ & No & Partial & No \\

\textbf{FedRAW (Ours)}
& \textbf{No$^{\#}$}
& \textbf{Yes}
& \textbf{Yes}
& \textbf{Yes}
& \textbf{Yes} \\
\bottomrule
\end{tabular}%
}
\\[2pt]
\raggedright
\footnotesize
$^{\dagger}$addresses aggregation efficiency, not label-coverage-aware influence;
$^{\ddagger}$targets sample-count imbalance, a distinct axis from label coverage, without modelling arrival-driven buffer dynamics;
$^{\S}$targets client-level or demographic fairness rather than rare-label coverage;
$^{\P}$compensates for stale updates rather than rare-label underrepresentation;
$^{\parallel}$improves participation coverage via cached updates but does not weight low-coverage labels;
$^{\#}$targets label-coverage (system-induced) rather than sample-count imbalance.
\end{table}

Overall, prior work (Table~\ref{tab:supp_positioning}) addresses important but separate challenges,
and none directly addresses the coupled failure studied here, where
rare-label clients may both enter the buffer less frequently and
receive insufficient influence once they do.

\section{FedRAW: Federated Rarity-Aware Weighting}

\begin{figure}[t]
  \centering
  \includegraphics[width=1\columnwidth]{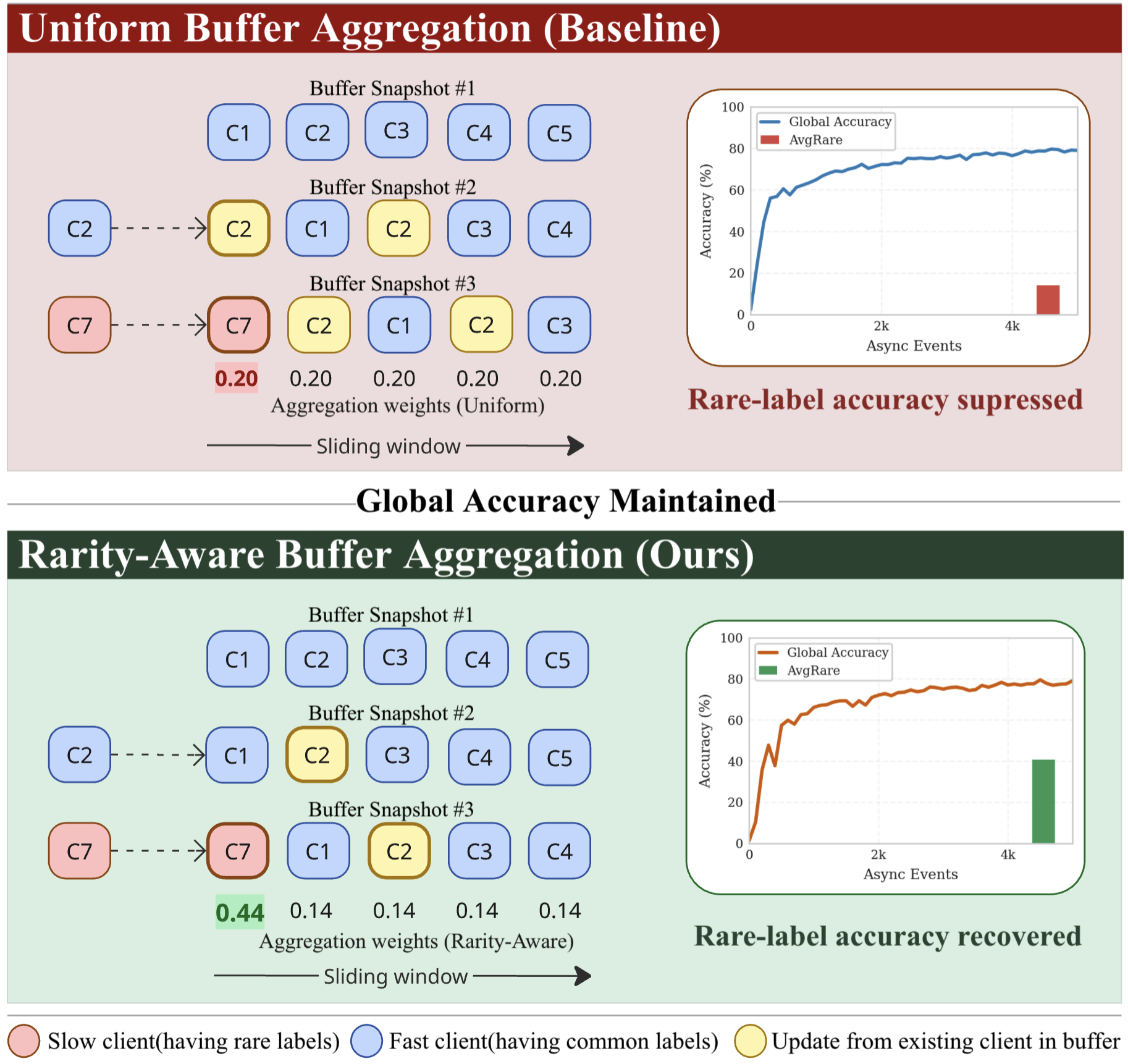}
\caption{
Comparison of uniform buffer aggregation (baseline) and FedRAW
across three buffer snapshots ($K=5$).
\textbf{Top:} Uniform buffering admits repeated updates from the
same client; hence, a new update from $C_2$ occupies a second slot.
When rare-label client $C_7$ arrives, uniform weighting
($w_i=1/K$) gives all buffered updates equal influence, allowing
frequent clients to dominate.
\textbf{Bottom:} FedRAW maintains at most one entry per client by
replacing $C_2$ in-place. Rarity-aware aggregation further assigns
greater weight to the rare-label client
($w_{C_7}=0.44$ vs.\ $0.14$ for each common-label client),
improving rare-label accuracy while maintaining global accuracy.
}
  \label{fig:fedraw_concept}
\end{figure}

FedRAW is a server-side correction to the two sources of silent
rarity failure: buffer deduplication improves buffer presence,
while rarity-aware weighting improves influence conditional on
buffer presence. Fig.~\ref{fig:fedraw_concept} illustrates the
difference from uniform buffer aggregation, and
Section~\ref{sec:analysis} then formalizes this intuition and
shows that FedRAW increases rare-label client influence relative
to uniform asynchronous aggregation while preserving convergence
under bounded staleness.

\subsection{Client-Level Buffer Deduplication}
In buffer-based asynchronous FL, the server maintains a sliding update buffer of capacity $K$ holding newly arrived client updates before aggregation. Under heterogeneous participation, fast clients may submit repeatedly and occupy multiple buffer positions, reducing the effective buffer presence of slower clients, including rare-label clients.

FedRAW introduces client-level buffer deduplication to prevent this repeated occupation. The server enforces a one-slot-per-client rule inside the update buffer. When a new update arrives from client $i$, the server first checks whether client $i$ already has an update in the buffer. If so, the previous update is replaced by the newly arrived update. If client $i$ is not currently present in the buffer, the new update is appended to the buffer; when the buffer exceeds capacity $K$, the oldest entry is evicted as in a standard sliding window.

The mechanism changes only how the server maintains its asynchronous update buffer, not client-side training or communication. By preventing fast clients from occupying multiple buffer positions, deduplication preserves buffer presence for distinct clients and directly mitigates participation bias, increasing the opportunity for rare-label clients to be represented in the aggregation buffer.

\subsection{Rare-Label-Aware Weighted Aggregation}

Deduplication improves the opportunity of rare-label clients to appear in the server-side buffer, but under uniform asynchronous aggregation every buffered update receives the same weight, even when some clients carry labels observed by only a few clients. FedRAW therefore introduces rare-label-aware weighting to preserve aggregation influence for clients carrying low-coverage labels.

FedRAW uses a one-time client label summary at initialization. It never accesses client data. We assume legitimate for clients to share local label distributions, which is often the practice in cross-silo FL such as medical applications~\cite{luo2022fedsld}, and in prior work where devices report local distribution information~\cite{duan2019}. The summary is used only to compute per-client rarity scores; no label information is exchanged during training. For each label $l$, we define its client coverage as the number of clients holding at least one sample of that label:
\begin{equation}
\label{eq:coverage}
\mathrm{coverage}(l) =
\left|{i : \mathrm{freq}(l,i) > 0}\right|,
\end{equation}
where $\mathrm{freq}(l,i)$ denotes the fraction of client $i$'s local data belonging to label $l$. A label with low client coverage is considered rare because it is represented by only a small number of clients.

The rarity score of client $i$ is computed by weighting each local label according to the inverse of its client coverage:
\begin{equation}
\label{eq:score}
S_i =
\sum_l
\frac{\mathrm{freq}(l,i)}{\mathrm{coverage}(l)}.
\end{equation}
The score is large when a substantial portion of a client's local data belongs to labels present in only a few clients, and small when its labels are widely shared across the federation. Scores are computed once at initialization and remain fixed during training.

Given the current server-side buffer $B$, FedRAW normalizes the rarity scores of the buffered clients to obtain aggregation weights:
\begin{equation}
\label{eq:weight}
w_i =
\frac{S_i}{\sum_{j \in B} S_j},
\qquad i \in B.
\end{equation}
The global model is then updated as:
\begin{equation}
\label{eq:aggregation}
\theta_{\mathrm{new}} =
\sum_{i \in B} w_i \theta_i .
\end{equation}

Clients carrying low-coverage labels therefore receive higher aggregation influence than under uniform $1/K$
averaging, especially when those labels form a meaningful portion of their local data. This directly mitigates
aggregation bias: once a rare-label client enters the buffer, its contribution is no longer interchangeable with
updates from clients carrying widely represented labels.

\begin{algorithm}[!t]
\caption{\textit{FedRAW}-Server}
\label{alg:fedraw}
\begin{algorithmic}[1]
\renewcommand{\algorithmicrequire}{\textbf{Input:}}
\renewcommand{\algorithmicensure}{\textbf{Output:}}
\REQUIRE Number of clients $N$; buffer size $K$; one-time client label-presence sets $\{\mathcal{Y}_i\}_{i=1}^{N}$; server-side buffer $B$
\ENSURE Global model $\theta_g$

\STATE Initialize $t=0$, global model $\theta_g^0$, and empty buffer $B$
\STATE Compute $\mathrm{coverage}(l)=|\{i:l\in\mathcal{Y}_i\}|$ for each label $l$
\STATE Compute rarity score $S_i = \sum_{l \in \mathcal{Y}_i} \mathrm{freq}(l,i)/\mathrm{coverage}(l)$ for all $i$
\STATE Broadcast $\theta_g^0$ to clients
\REPEAT
\IF{server receives update $\theta_i^t$ from client $i$}
\IF{client $i$ already has an entry in $B$}
\STATE Replace the existing entry of client $i$ in $B$ with $\theta_i^t$
\ELSE
\STATE Append $(i,\theta_i^t)$ to $B$
\IF{$|B|>K$}
\STATE Evict the oldest entry from $B$
\ENDIF
\ENDIF
\IF{$|B|=K$}
  \STATE $Z \leftarrow \sum_{j\in B} S_j$
  \STATE $\theta_g^{t+1} \leftarrow \sum_{j\in B} \dfrac{S_j}{Z}\theta_j$
  \STATE Broadcast $\theta_g^{t+1}$ to clients
\ENDIF
\STATE $t \leftarrow t+1$
\ENDIF
\UNTIL{convergence criterion is met}
\end{algorithmic}
\end{algorithm}

Algorithm~\ref{alg:fedraw} summarizes the server-side procedure, where $K$ is the buffer capacity, $B$ the server-side update buffer, $\mathcal{Y}_i$ the one-time label-presence set reported by client $i$, $S_i$ its rarity score, and $\theta_i^t$ the model update received from client $i$ at event $t$.
 
FedRAW adds $O(K)$ work per aggregation event -- one pass over the
buffer to sum rarity scores and compute normalised weights. The
scores $S_i$ are computed once at initialisation in $O(NL)$ time,
where $L$ is the number of labels, and never recomputed during
training, while deduplication requires a single hash-map lookup
per arriving update, adding $O(1)$ overhead per event.

\section{Analysis}
\label{sec:analysis}

This section formalizes the conditions under which rare-label clients lose aggregation influence under uniform buffer-based asynchronous FL, and under which FedRAW mitigates that loss, focusing on client influence at the server-side aggregation pathway.

\subsection{Participation Bias under Heterogeneous Arrival}

Let client $i$ require $s_i>0$ seconds to produce an update. Under arrival-driven asynchronous training, clients with smaller update time submit updates more frequently, so we model the arrival rate of client $i$ as $r_i=1/s_i$ and the probability that the next arriving update comes from client $i$ as $P(i)=r_i/\sum_{j=1}^{N} r_j$.

\begin{proposition}[Participation disadvantage]
If rare-label clients are slower on average than common-label clients, i.e.,
$\mathbb{E}[s_R]>\mathbb{E}[s_C]$, then their expected arrival probability is
smaller, $P(R) < P(C)$, and they consequently enter the server-side update buffer
less frequently under arrival-driven aggregation.
\end{proposition}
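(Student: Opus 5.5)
The plan is to reduce the claim to a comparison of arrival rates. The normalizing constant $\sum_{j=1}^{N} r_j$ is common to all clients, so $P(i)$ is a strictly increasing function of $r_i=1/s_i$. I will read $P(R)$ and $P(C)$ as the expected single-client arrival probabilities of a rare-label and a common-label client, i.e.\ $P(R)=\mathbb{E}[1/s_R]/\sum_j r_j$ and analogously for $P(C)$. The inequality $P(R)<P(C)$ is then equivalent to $\mathbb{E}[1/s_R]<\mathbb{E}[1/s_C]$.

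The key step, and the main obstacle, is passing from $\mathbb{E}[s_R]>\mathbb{E}[s_C]$ to $\mathbb{E}[1/s_R]<\mathbb{E}[1/s_C]$. This does not follow from the means alone, since $x\mapsto 1/x$ is convex: by Jensen, $\mathbb{E}[1/s]\ge 1/\mathbb{E}[s]$, and a heavy-spread rare group could have a larger mean reciprocal. I would therefore state the regularity condition the proposition implicitly uses and prove the result under it.

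\emph{(a) Homogeneous speeds within each group.} If $s_R$ and $s_C$ are constant within their groups (the setting of Exp~3), then $1/s$ is strictly decreasing on $(0,\infty)$, so $s_R>s_C$ gives $r_R<r_C$ and hence $P(R)<P(C)$ immediately.

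\emph{(b) Stochastic ordering.} More generally, assume $s_R$ first-order stochastically dominates $s_C$ with $\mathbb{E}[s_R]>\mathbb{E}[s_C]$, so the dominance is strict. Then $\mathbb{E}[g(s_R)]<\mathbb{E}[g(s_C)]$ for the strictly decreasing function $g(x)=1/x$. To prove this, I would couple the two variables through quantile functions, $s_R=F_R^{-1}(U)\ge F_C^{-1}(U)=s_C$ almost surely, with strict inequality on a set of positive measure. I would also note that the bare mean condition suffices whenever the within-group dispersion is small, for example via a second-order expansion $\mathbb{E}[1/s]\approx 1/\mu+\sigma^2/\mu^3$.

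For the buffer statement, I would model arrivals as i.i.d.\ draws with probabilities $P(i)$, which is consistent with independent Poisson clocks of rates $r_i$ (the standard competing-exponentials argument). Over any window of $T$ arrivals, client $i$ contributes on average $T\,P(i)$ updates. A sliding buffer of capacity $K$ holds the last $K$ arrivals, so the expected number of slots held by client $i$ is $K\,P(i)$. Equivalently, the probability that client $i$ appears in the buffer is $1-(1-P(i))^{K}$, which is increasing in $P(i)$.

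Combining these with $P(R)<P(C)$ shows that rare-label clients occupy fewer expected buffer positions and are less likely to be present at an aggregation event. This yields the ``enter the buffer less frequently'' conclusion and sets up the deduplication and weighting analysis that follows.
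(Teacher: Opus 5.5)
Your core step is the same as the paper's: the normalizer $\sum_j r_j$ is shared, so $P(i)$ is strictly decreasing in $s_i$. The paper's proof stops there. It asserts that clients that are slower \emph{on average} have lower normalized arrival probability, and gives no argument for the buffer claim. You go further in two ways, and both matter.

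First, you correctly flag that pointwise monotonicity of $x\mapsto 1/x$ does not transfer to group means. This is a real hole in the statement as written, not pedantry. Under your per-client reading, take a rare group with update times $0.1$ and $10$ (mean time $5.05$, mean rate $5.05$) and a common group whose times all equal $1$. Then $\mathbb{E}[s_R]>\mathbb{E}[s_C]$, yet $P(R)>P(C)$. So an extra hypothesis such as your (a) or (b) is genuinely needed. Your quantile-coupling proof under (b) is sound: with finitely many clients all expectations are finite, so concluding $\mathbb{E}[1/s_C-1/s_R]>0$ is legitimate.

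Second, your Poisson-superposition model turns the paper's informal ``consequently enter the buffer less frequently'' into an actual statement. It gives $K\,P(i)$ expected slots and presence probability $1-(1-P(i))^K$ in the standard sliding buffer. The paper's route buys brevity; yours buys a correct hypothesis and a proof of the second half of the claim.

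A few small repairs are needed.
\begin{itemize}
\item The correlated-speed setting (Exp~3) is not homogeneous within groups: update times are drawn from $[1.5,3.0]$\,s for rare clients and $[0.5,1.5]$\,s for common clients. It satisfies (b) in a stronger pairwise form, $r_i\le 1/1.5\le r_j$ for every rare $i$ and common $j$, so every conclusion there holds client by client.
\item Under (b) with heterogeneous within-group speeds, ``less likely to be present'' does not follow from $P(R)<P(C)$ alone. The map $p\mapsto 1-(1-p)^K$ is concave, so the same Jensen issue returns. Instead, apply your coupling directly to the strictly decreasing map $s\mapsto 1-\bigl(1-1/(sZ)\bigr)^K$ with $Z=\sum_j r_j$ held fixed. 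The slot count $K\,P(i)$ is linear, so there the mean inequality suffices.
\item Your second-order expansion is only a heuristic. A rigorous dispersion condition follows from the chord bound $1/x\le(a+b-x)/(ab)$ on $[a,b]$. For $s_R\in[a,b]$ this gives $\mathbb{E}[1/s_R]\le(a+b-\mathbb{E}[s_R])/(ab)$. Combined with Jensen's $\mathbb{E}[1/s_C]\ge 1/\mathbb{E}[s_C]$, the condition $(a+b-\mathbb{E}[s_R])/(ab)<1/\mathbb{E}[s_C]$ suffices.
\end{itemize}
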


Since $r_i=1/s_i$ decreases monotonically in $s_i$, slower clients have smaller arrival rates, so rare-label clients that are slower on average have lower normalized arrival probability. This produces participation bias before aggregation even occurs. $\square$

\subsection{Aggregation Bias under Uniform Buffer Weighting}

Uniform buffer aggregation creates a second disadvantage: once a rare-label update is present, it receives the same weight as every other buffered update. Let $B$ be the aggregation buffer of size $K$ and $w_i$ the weight of client $i$ when present in $B$, so $w_i=1/K$ under uniform aggregation.

Define the per-event influence of client $i$ as $I_i = P(i\in B)\cdot w_i$.

\begin{proposition}[Aggregation disadvantage]
Under uniform buffer aggregation, if a rare-label client has lower buffer presence probability than a common-label client, then its per-event influence is also lower:
\begin{equation}
\label{eq:uniform_disadvantage}
I_R^{\mathrm{uniform}}=P(R\in B)\frac{1}{K}<P(C\in B)\frac{1}{K}=I_C^{\mathrm{uniform}}.
\end{equation}
\end{proposition}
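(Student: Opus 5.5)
The argument is a one-line consequence of the definition of per-event influence, $I_i = P(i\in B)\cdot w_i$, together with the uniform weighting rule $w_i = 1/K$. I would first make explicit that, under uniform buffer aggregation, the weight assigned to a buffered update does not depend on the identity or label content of the contributing client. Hence both the rare-label client $R$ and the common-label client $C$ receive the same weight $1/K$ whenever they are present. Substituting into the definition gives
\[
I_R^{\mathrm{uniform}} = \frac{P(R\in B)}{K}, \qquad I_C^{\mathrm{uniform}} = \frac{P(C\in B)}{K}.
\]

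The second step is to use the hypothesis $P(R\in B) < P(C\in B)$. Multiplying both sides of a strict inequality by the positive constant $1/K$ (with $K \ge 1$) preserves it. This yields exactly the chain in \eqref{eq:uniform_disadvantage}. No distributional assumptions on arrivals are needed here, since the buffer presence probabilities enter only as given numbers.

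The only point requiring care, and the one I would flag as the \emph{main obstacle} conceptually rather than technically, is connecting the hypothesis to the earlier Proposition on participation disadvantage. That result gives $P(R) < P(C)$ for the \emph{arrival} probability, not for the \emph{buffer presence} probability. To justify that the hypothesis is naturally satisfied, I would argue the following for a sliding window of the $K$ most recent arrivals without deduplication. The event $i\in B$ is the event that at least one of the last $K$ arrivals comes from $i$. If arrivals are modeled as i.i.d.\ draws with probabilities $P(i)$, this gives $P(i\in B) = 1-(1-P(i))^K$. That expression is strictly increasing in $P(i)$, so $P(R) < P(C)$ implies $P(R\in B) < P(C\in B)$. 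I would present this as a remark supporting the hypothesis, while keeping the proposition itself conditional on the presence inequality as stated. That way the proof remains valid even if the i.i.d.\ arrival model is only approximate.
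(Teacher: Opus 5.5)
Your proof is correct and takes the same approach as the paper: every buffered update gets the same weight $1/K$, so the strict inequality $P(R\in B)<P(C\in B)$ carries over directly to $I_R^{\mathrm{uniform}}<I_C^{\mathrm{uniform}}$. The paper only asserts that ``rare-label clients enter the buffer less frequently''; your remark deriving $P(i\in B)=1-(1-P(i))^K$ for a sliding window under i.i.d.\ arrivals, which is strictly increasing in $P(i)$, makes that link to the participation-disadvantage proposition explicit.
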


Because the same weight $1/K$ is assigned to all buffered updates, any difference in buffer presence probability transfers directly to aggregation influence, and rare-label clients enter the buffer less frequently under heterogeneous arrival. $\square$

\subsection{Influence Preservation under FedRAW}

FedRAW addresses the two sources of influence loss at the server side. Client-level update deduplication improves buffer presence by preventing frequently arriving clients from occupying multiple buffer positions. Rare-label-aware weighting improves aggregation influence once a rare-label update is present.

\begin{proposition}[Effect of update deduplication]
Let $B^{\mathrm{uniform}}$ be a standard sliding buffer and $B^{\mathrm{dedup}}$ be the FedRAW buffer with at most one update per client. If a frequently arriving client occupies multiple positions in $B^{\mathrm{uniform}}$, then deduplication weakly increases the number of distinct clients represented in the buffer and reduces duplicate-slot domination by fast clients.
\end{proposition}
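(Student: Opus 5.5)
The argument is a counting comparison between the two buffers, run on the same arrival sequence. Fix a realization of the arrival stream $a_1,a_2,\dots,a_T$ (the client identities of successive updates) and let both buffers process it. Since capacity is $K$ in both cases, write the distinct-client count of a buffer $B$ as
\[
D(B) \;=\; |\{i : i \text{ has an entry in } B\}| \;=\; |B| - \mathrm{dup}(B),
\]
where $\mathrm{dup}(B)=\sum_i (m_i(B)-1)_+$ and $m_i(B)$ is the number of slots client $i$ occupies. By construction of Algorithm~\ref{alg:fedraw}, $m_i(B^{\mathrm{dedup}})\le 1$ for every $i$. Hence $\mathrm{dup}(B^{\mathrm{dedup}})=0$ and $D(B^{\mathrm{dedup}})=|B^{\mathrm{dedup}}|$. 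For the uniform buffer, the hypothesis that some fast client occupies multiple positions gives $\mathrm{dup}(B^{\mathrm{uniform}})\ge 1$, so $D(B^{\mathrm{uniform}})\le K-1$.

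The main step is to show $D(B^{\mathrm{dedup}})\ge D(B^{\mathrm{uniform}})$ at every event. I would characterize both buffers through the recent arrival history.

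The uniform sliding window holds exactly the last $K$ arrivals, so its distinct clients are the distinct identities among $a_{T-K+1},\dots,a_T$.

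For the deduplicated buffer, I would prove by induction on $T$ the invariant that $B^{\mathrm{dedup}}$ contains the $\min(K,\cdot)$ most recently \emph{first-inserted} distinct clients. Here "oldest" is read as the entry's original insertion time, since replacement happens in place. It is also worth noting that Algorithm~\ref{alg:fedraw} evicts only on new appends.

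The invariant gives the count directly. Let $d=D(B^{\mathrm{uniform}})$. Those $d$ clients have all arrived in the current window, so each is present in the dedup buffer or was displaced only by a newer distinct client. Counting distinct entries then gives $|B^{\mathrm{dedup}}|\ge \min(K,d)=d$.

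The in-place replacement rule is the main obstacle. Because the slot keeps its old position, a client that arrived recently could be evicted sooner than "most recent distinct arrivals" would suggest. So the invariant must be stated in terms of insertion order and the count argued from that. If a clean pathwise comparison fails, the fallback is weaker: argue only that once the dedup buffer is full, $D(B^{\mathrm{dedup}})=K\ge D(B^{\mathrm{uniform}})$. This holds at every aggregation event, because Algorithm~\ref{alg:fedraw} aggregates only when $|B|=K$, and that suffices for the proposition.

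Duplicate-slot domination by a fast client $f$ is measured by its slot share $m_f(B)/|B|$, which is also its aggregation mass under uniform weights. The uniform buffer has $m_f(B^{\mathrm{uniform}})\ge 2$ under the hypothesis, while the dedup buffer has $m_f(B^{\mathrm{dedup}})\le 1$. At aggregation events both buffers hold $K$ entries, so the share drops from at least $2/K$ to at most $1/K$. That is a strict reduction, and it gives the second claim.

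Finally, combining this with the influence definition $I_i=P(i\in B)\,w_i$ and taking expectations over arrivals drawn with $P(i)=r_i/\sum_j r_j$, I would remark that freeing these slots weakly raises $P(R\in B)$ for slow rare-label clients. I would state this only as a consequence, not as part of the claim.
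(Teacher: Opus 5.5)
Your proposal is correct and rests on the same observation as the paper's proof. The one-slot-per-client rule gives $\mathrm{dup}(B^{\mathrm{dedup}})=0$, so buffer capacity goes entirely to distinct clients; you add two things the paper leaves implicit, namely coupling both buffers on one arrival sequence and comparing at aggregation events, where $|B^{\mathrm{dedup}}|=K$.

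The pathwise comparison you flagged as the main obstacle is actually immediate. Since $|B^{\mathrm{dedup}}|=\min(K,A_T)$ with $A_T$ the number of appends, and every client's first arrival is an append, you get $|B^{\mathrm{dedup}}|\ge d$. Your caution about in-place replacement belongs instead on the closing remark about $P(R\in B)$. Under your in-place reading, per-client presence is not pathwise monotone: with $K=2$ and arrivals $s,a,s,b$, client $s$ is in the sliding window but has been evicted from the deduplicated buffer. That remark would therefore need a separate distributional argument.
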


In a standard sliding buffer, repeated arrivals from the same fast client can occupy multiple positions; FedRAW replaces that client's existing buffered update instead of appending a duplicate, so duplicate slots are removed and capacity is reserved for distinct clients. This weakly improves the opportunity for slower clients, including rare-label clients, to be represented. $\square$

\begin{proposition}[Effect of rare-label-aware weighting]
For each client $i$, FedRAW computes a client rarity score
\begin{equation}
S_i=\sum_{l\in \mathcal{Y}_i}\frac{\mathrm{freq}(l,i)}{\mathrm{coverage}(l)},
\end{equation}
where $\mathcal{Y}_i$ is the label-presence set of client $i$. 
Given buffer $B$, FedRAW assigns
\begin{equation}
w_i^{\mathrm{FedRAW}}=
\frac{S_i}{\sum_{j\in B}S_j}.
\end{equation}
For any buffered rare-label client $r$ and common-label client $c$, if $S_r>S_c$, then
\begin{equation}
w_r^{\mathrm{FedRAW}}>w_c^{\mathrm{FedRAW}}.
\end{equation}
Moreover, if $S_r$ is larger than the average rarity score of the buffered clients, then $w_r^{\mathrm{FedRAW}}>1/K$.
\end{proposition}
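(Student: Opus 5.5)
The argument is direct, because FedRAW's weights are a positive rescaling of the rarity scores over the current buffer. First I establish that every quantity involved is well defined and positive. For any client $i$ and any $l\in\mathcal{Y}_i$, the definition of $\mathcal{Y}_i$ gives $\mathrm{freq}(l,i)>0$. Client $i$ itself holds $l$, so $\mathrm{coverage}(l)\ge 1$ by Eq.~\eqref{eq:coverage}. Every summand of $S_i$ is therefore positive, and $S_i>0$ whenever client $i$ holds any data at all. It follows that the normalizer $Z=\sum_{j\in B}S_j$ in Eq.~\eqref{eq:weight} is strictly positive for any nonempty buffer, so the weights $w_i^{\mathrm{FedRAW}}$ are well defined, positive, and sum to one.

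For the first claim, take buffered clients $r,c\in B$ with $S_r>S_c$. Both weights share the same denominator $Z>0$, so
\begin{equation}
w_r^{\mathrm{FedRAW}}-w_c^{\mathrm{FedRAW}}=\frac{S_r-S_c}{Z}>0,
\end{equation}
and the first inequality follows. The labels "rare" and "common" play no role beyond the hypothesis $S_r>S_c$. That hypothesis is exactly what the inverse-coverage construction of Eq.~\eqref{eq:score} is designed to deliver.

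For the second claim, I use the fact that aggregation in Algorithm~\ref{alg:fedraw} is triggered only when $|B|=K$. Deduplication guarantees these are $K$ distinct clients, so $\bar S=\frac{1}{K}\sum_{j\in B}S_j$ is the average rarity score over $K$ distinct buffered clients. The hypothesis $S_r>\bar S$ is then equivalent to $S_r>Z/K$. Dividing by $Z>0$ gives $w_r^{\mathrm{FedRAW}}=S_r/Z>1/K$.

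The only real obstacle is this buffer-size bookkeeping. The comparison with the uniform weight $1/K$ needs the aggregation buffer to contain exactly $K$ entries, each counted once. If aggregation could happen with $|B|<K$, the correct uniform baseline would be $1/|B|$, and the same computation would give $w_r>1/|B|$. I would state the result in terms of $|B|$ and note that it coincides with $1/K$ under the trigger condition in Algorithm~\ref{alg:fedraw}. Combining this with Proposition~2 then gives the intended corollary: whenever $S_r$ exceeds the buffer average, $I_R^{\mathrm{FedRAW}}=P(R\in B)\,w_r^{\mathrm{FedRAW}}>P(R\in B)/K=I_R^{\mathrm{uniform}}$, with buffer presence held fixed.
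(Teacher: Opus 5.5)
Your proposal is correct and follows the same argument as the paper: the weights are rarity scores divided by a common positive normalizer, so score order is preserved, and a score above the buffer mean $Z/K$ gives a weight above $1/K$. Your version adds details the paper leaves implicit, namely that $Z>0$ so the weights are well defined, and that aggregation in Algorithm~\ref{alg:fedraw} occurs only at $|B|=K$ with distinct clients, which is what makes the buffer average equal to $Z/K$.
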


Because the weights are normalized rarity scores, a larger score yields a larger aggregation weight, and a score above the buffer average yields a weight exceeding the uniform $1/K$ -- directly compensating for aggregation bias once the rare-label client is in the buffer. $\square$

Together, these propositions show that when rare clients carry above-average rarity scores, deduplication mitigates duplicate-slot domination while rare-label-aware weighting increases the influence of low-coverage clients during aggregation.

\subsection{Convergence under Bounded Staleness}
\label{sec:convergence}

We show that rarity-aware weighting preserves convergence. The central point is
that a \emph{normalized} weighted aggregator performs stochastic gradient descent
not on the uniform objective $f(\theta)=\frac{1}{m}\sum_{i=1}^{m}F_i(\theta)$, but on
an \emph{influence-weighted} objective $\bar F$ determined by the rarity weights.
FedRAW attains the standard non-convex stationary-point guarantee for $\bar F$: the
average squared gradient $\frac1T\sum_t\mathbb{E}\|\nabla\bar F(\theta_g^t)\|^2$ is
bounded by a vanishing $\mathcal{O}(1/\sqrt{TQ})$ optimization-and-variance term plus a
bounded-staleness constant controlled by the drift assumption, matching the
buffered-FL guarantees of FedFa~\cite{yi2024}/FedBuff~\cite{nguyen2022} (to which the
bound reduces under uniform weights and uniform participation). Full
proofs are in Supplementary Section~S-VIII.

\textit{Assumptions.}
We adopt the standard assumptions of buffered asynchronous
FL~\cite{nguyen2022,yi2024}: each $F_i$ is $L$-smooth; stochastic gradients are
unbiased with bounded local variance $\sigma_\ell^2$; gradients and client
heterogeneity are bounded ($\|\nabla F_i\|^2\le G$,
$\sigma^2:=\sigma_\ell^2+\sigma_g^2+G$); and staleness satisfies
$\tau_i(t)\le\tau_{\max}$. FedRAW additionally assumes
(i) fixed rarity scores computed once at initialization,
(ii) stationary (possibly non-uniform) client participation,
(iii) the standard bounded-drift condition
$\mathbb{E}\|\theta_g^t-\theta_g^{t-\tau}\|^2\le\Psi_\tau$, and
(iv) bounded buffer-sampling variance
$\mathbb{E}\|\sum_{i\in B_t}w_i\nabla F_i-\nabla\bar F\|^2\le\sigma_B^2\le4\rho G$.

\textit{Influence-weighted objective.}
Let
$\pi_i:=\mathbb{E}[\,w_i\indicator\{i\in B_t\}\,]$
denote the expected aggregation influence of client $i$, where
$w_i=S_i/\sum_{j\in B_t}S_j$.
Since $w_i\ge0$ and $\sum_{i\in B_t}w_i=1$, the influences satisfy
$\sum_i\pi_i=1$, making
$\bar F(\theta)=\sum_i\pi_iF_i(\theta)$
a valid $L$-smooth objective.
Taking expectation over the buffer first yields
$\mathbb{E}[\sum_{i\in B_t}w_ig_i]=\nabla\bar F$,
so FedRAW optimizes $\bar F$ rather than the uniform objective $f$.
Larger rarity scores increase a client's \emph{conditional} aggregation weight
when present in the buffer, whereas the unconditional influence
$\pi_i$ depends jointly on rarity scores and participation probabilities.
Consequently, the convergence guarantee is with respect to stationary points of
$\bar F$; $\bar F=f$ only under uniform weights and uniform participation.

\textit{Weight non-uniformity.} Let
$w_{\mathrm{sq}}:=\sup_t\sum_{i\in B_t}w_i^2$; by Cauchy--Schwarz
$w_{\mathrm{sq}}\ge 1/K$, with equality iff the weights are uniform. We define
$\rho:=K\,w_{\mathrm{sq}}\in[1,K]$ as a bounded non-uniformity factor, so $\rho=1$
recovers uniform weighting and larger $\rho$ reflects more aggressive rare-label
amplification.

\begin{theorem}[Convergence of FedRAW]
\label{thm:main}
Under the assumptions above and the step-size condition
$\eta_\ell^{(q)}Q\le 1/(4L)$ for all local steps $q$ (with server rate $\eta_g=1$, i.e.\
parameter averaging as in Algorithm~1), the FedRAW iterates satisfy
\begin{align}
\label{eq:theorem}
\frac{1}{T}\sum_{t=0}^{T-1}\!\mathbb{E}\big\|\nabla\bar F(\theta_g^{t})\big\|^2
&\le 
\frac{8\,\bar F^{\star}}{\alpha(Q)\,T}
+ 24L^2 Q\,\beta(Q)\big(\tau_{\max}^2+1\big)\sigma^2 \notag\\
&\quad + \rho\cdot\frac{8L\,\beta(Q)}{\alpha(Q)}\cdot\Big(\frac{\sigma_\ell^2}{K}+4G\Big) \notag\\ 
&\quad + C_{\mathrm{drift}}\,\sigma^2\tau_{\max}^2,
\end{align}
where $\bar F^{\star}=\bar F(\theta_g^{0})-\inf_\theta\bar F(\theta)$,
$\alpha(Q)=\sum_{q}\eta_\ell^{(q)}$, $\beta(Q)=\sum_{q}(\eta_\ell^{(q)})^2$, and
$C_{\mathrm{drift}}=\mathcal{O}(1)$ is the bounded-drift constant (Assumption on global
drift). The
$\rho$-scaled term collects the local-noise variance $\sigma_\ell^2/K$ and the
buffer-sampling variance $\sigma_B^2\le4\rho G$; the staleness terms are $\rho$-free.
The first three terms vanish under the diminishing schedule; the last is the
irreducible bounded-staleness constant of asynchronous buffered FL. All leading
constants are derived in the supplementary material; only a single $\mathcal{O}(1)$
step-size factor is left in order notation.
\end{theorem}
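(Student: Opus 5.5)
The plan is the standard descent-lemma argument for buffered asynchronous FL (as in FedBuff/FedFa), applied to the influence-weighted objective $\bar F$ instead of $f$. Since $\eta_g=1$ and the update is parameter averaging, I will first rewrite Algorithm~\ref{alg:fedraw}'s step as a pseudo-gradient step. Write $\theta_j=\theta_g^{t-\tau_j(t)}-\sum_{q}\eta_\ell^{(q)}g_j^{(q)}$, where the $g_j^{(q)}$ are the local stochastic gradients started from the stale model. Then
\[
\theta_g^{t+1}=\theta_g^t-\Delta^t,\qquad \Delta^t=\sum_{j\in B_t}w_j\sum_q\eta_\ell^{(q)}g_j^{(q)}+\sum_{j\in B_t}w_j\big(\theta_g^t-\theta_g^{t-\tau_j(t)}\big).
\]
The second sum is a pure staleness-mismatch term. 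I will bound it with assumption (iii), $\mathbb{E}\|\cdot\|^2\le\Psi_{\tau_{\max}}$, which gives the $C_{\mathrm{drift}}\sigma^2\tau_{\max}^2$ constant and explains why that term does not vanish.

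Next I apply $L$-smoothness of $\bar F$, which holds because $\sum_i\pi_i=1$ and $\pi_i\ge0$:
\[
\mathbb{E}\bar F(\theta_g^{t+1})\le\bar F(\theta_g^t)-\mathbb{E}\langle\nabla\bar F(\theta_g^t),\Delta^t\rangle+\tfrac L2\mathbb{E}\|\Delta^t\|^2 .
\]
For the inner product, I add and subtract $\alpha(Q)\nabla\bar F(\theta_g^t)$. Taking expectation over the buffer first and using stationary participation (ii) with fixed scores (i), we get $\mathbb{E}[\sum_{j\in B_t}w_j\nabla F_j]=\nabla\bar F$. The inner product then yields $-\tfrac{\alpha(Q)}{2}\|\nabla\bar F(\theta_g^t)\|^2$ plus an error. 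That error is $\sum_q\eta_\ell^{(q)}\|\nabla F_j(\theta_g^t)-\nabla F_j(\theta_j^{(q)})\|^2$, and I split it into two parts:
\begin{itemize}
\item \emph{local drift:} $\|\theta_j^{(q)}-\theta_g^{t-\tau_j}\|^2\le O(Q\beta(Q)\sigma^2)$, by the usual recursion under $\eta_\ell Q\le 1/(4L)$;
\item \emph{staleness:} $\|\theta_g^{t}-\theta_g^{t-\tau_j}\|^2$, bounded by $\tau_{\max}^2$ times a per-step second moment.
\end{itemize}
Together these give the $24L^2Q\beta(Q)(\tau_{\max}^2+1)\sigma^2$ term. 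These terms are $\rho$-free because they are averaged with weights summing to one (Jensen), not squared weights.

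The $\tfrac L2\mathbb{E}\|\Delta^t\|^2$ term is where $\rho$ enters, and I expect it to be the main obstacle. I will decompose $\Delta^t$ into three pieces:
\begin{itemize}
\item \emph{local noise:} by independence across clients, $\mathbb{E}\|\sum_j w_j\sum_q\eta^{(q)}(g_j^{(q)}-\nabla F_j)\|^2=\sum_j w_j^2\,\beta(Q)\sigma_\ell^2\le w_{\mathrm{sq}}\beta\sigma_\ell^2=\rho\beta\sigma_\ell^2/K$;
\item \emph{buffer-sampling deviation:} bounded via assumption (iv) by $\beta\sigma_B^2\le4\rho G\beta$;
\item \emph{mean part:} bounded by $\|\nabla\bar F\|^2$ times a small factor, and absorbed into the descent term using the step-size condition.
\end{itemize}
The subtle point is that the weights $w_j$ depend on $B_t$, so the noise and buffer randomness must be conditioned in the right order. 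I will condition on $B_t$ first, so that $w_j$ is fixed while the local noise is averaged, and only then average over $B_t$.

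Finally I rearrange, sum over $t=0,\dots,T-1$, telescope to $\bar F^\star$, and divide by $\alpha(Q)T/8$. This produces exactly the four terms: the optimization term $8\bar F^\star/(\alpha T)$, the drift/staleness term, the $\rho$-scaled variance term $\rho\cdot 8L\beta/\alpha\,(\sigma_\ell^2/K+4G)$, and the drift constant. Under a diminishing schedule, $\beta/\alpha\to0$ and $Q\beta\to0$, so the first three terms vanish.
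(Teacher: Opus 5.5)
Your proposal follows the paper's proof essentially step for step: the same split of the parameter-averaging step into a pseudo-gradient and a staleness displacement ($\theta_g^{t+1}=\theta_g^t+D^t-\tilde P^t$ in the paper), the descent lemma on $\bar F$ with the buffer expectation taken first, Jensen over the convex weights for the $\rho$-free staleness/local-drift residual, the $\sum_{i\in B_t}w_i^2\le\rho/K$ noise bound plus the $\sigma_B^2\le4\rho G$ buffer term, absorption of the mean part via $L\alpha(Q)\le\tfrac14$, and telescoping with net coefficient $-\alpha(Q)/8$. Make two points explicit, as the paper does. First, the non-vanishing $C_{\mathrm{drift}}\sigma^2\tau_{\max}^2$ floor comes specifically from the \emph{linear} term $\langle\nabla\bar F,D^t\rangle$, handled by Young's inequality, which costs $\tfrac{\alpha(Q)}{4}\|\nabla\bar F\|^2$ and produces $\Psi_{\tau_{\max}}/\alpha(Q)$. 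Second, the staleness distance $\|\theta_g^t-\theta_g^{t-\tau_j}\|^2$ cannot be closed by a per-step second-moment recursion at $\eta_g=1$, because each step itself contains a displacement $D^\rho$; it must be bounded directly by the drift assumption (iii).
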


With the diminishing schedule
$\eta_\ell=\mathcal{O}(1/\sqrt{TQ})$ (and $\eta_g=1$),
the first three terms of~\eqref{eq:theorem} decrease as
$\mathcal{O}(1/\sqrt{TQ})$, while the remaining
$\mathcal{O}(\sigma^2\tau_{\max}^2)$ term is the standard bounded-staleness
neighborhood of buffered asynchronous FL. Importantly, this neighborhood is
independent of the rarity-weighting factor $\rho$; $\rho$ affects only the
variance term through the non-uniform aggregation weights. Exact constants are
given in Supplementary Section~S-VIII.

\noindent\textbf{Remarks.}
\emph{(i) Cost of rarity weighting.}
Rarity-aware aggregation increases only the variance term through the bounded
factor $\rho\in[1,K]$; the staleness term remains unchanged and carries no
$\rho$ dependence.
\emph{(ii) Consistency.}
Under uniform weights and uniform participation,
$\pi_i=1/m$, $\bar F=f$, and Theorem~\ref{thm:main} reduces to the
FedFa/FedBuff convergence guarantee. Deduplication does not introduce an
additional optimization term; it acts only by modifying the stationary
participation distribution.

\section{Evaluation Results}
\subsection{Experimental Setup}

\subsubsection{Datasets and Federated Partitions}

We evaluate FedRAW on four datasets covering controlled and real-world settings. Rarity here is defined by \emph{label coverage}: a rare label is held by only a small number of clients, which is exactly the condition under which rare-label clients appear less often in the asynchronous buffer and receive insufficient aggregation influence.

\textbf{EMNIST Balanced}~\cite{cohen2017emnist} contains 112,800 training and 18,800 test images across 47 classes; we use 30 clients with a controlled long-tail non-IID partition in which labels {44, 45, 46} are rare and each held by only 2 clients, while common labels are held by 20 clients each. \textbf{CIFAR-10}~\cite{krizhevsky2009cifar} contains 50,000 training and 10,000 test images across 10 classes, partitioned with the same client-coverage structure, with airplane (0) and frog (6) designated rare under correlated speed. \textbf{HAM10000}~\cite{tschandl2018} contains 10,015 dermoscopy images across 7 diagnostic classes, of which DF (115 images, 1.1\%) and VASC (142 images, 1.4\%) are naturally rare, a 58:1 imbalance ratio relative to the dominant NV class. \textbf{ISIC~2019}~\cite{isic2019} contains 25,331 dermoscopy images across 8 diagnostic classes, with VASC (253 images) and DF (239 images) designated rare, a 54:1 imbalance ratio. Both medical datasets follow the same 30-client structure, with 2 clients per rare class and 20 clients per common class.

Unless stated otherwise, all experiments use $N=30$ clients and
buffer size $K=10$, with 2 rare clients per rare label, 5{,}000
asynchronous events on EMNIST Balanced and 10{,}000 on the other
datasets, 2 local epochs, learning rate 0.01, batch size 256, and
seeds 42, 123, and 456. Under the correlated-speed setting,
rare-label clients draw update times from $[1.5,3.0]$\,s and
common-label clients from $[0.5,1.5]$\,s. EMNIST Balanced uses a
lightweight CNN, CIFAR-10 a ResNet-18~\cite{he2016resnet}, and
HAM10000/ISIC-2019 an ImageNet~\cite{deng2009imagenet}-pretrained
ResNet-18~\cite{he2016resnet}. For q-FedAvg we set $q=5$.

\begin{table*}[t]
\centering
\caption{Performance comparison across datasets under correlated 
speed setting ($K=10$, $N=30$), averaged over 3 seeds. 
Methods above the divider are Synchronous baselines;
below are Asynchronous. \textbf{Bold}: best asynchronous 
result; \underline{underline}: second best.}
\label{tab:main_results_all}
\scriptsize
\renewcommand{\arraystretch}{0.95}
\setlength{\tabcolsep}{2pt}
\resizebox{\textwidth}{!}{%
\begin{tabular}{llcccccccc}
\toprule
\textbf{Dataset} &
\textbf{Algorithm} &
\textbf{GlobalAcc} &
\textbf{AvgRare} &
\textbf{Worst-10\%} &
\textbf{MacroF1} &
\textbf{RareF1} &
\textbf{RareF2} &
\textbf{LocalRare} &
\textbf{LocalCommon} \\
\midrule

\multirow{6}{*}{\rotatebox[origin=c]{90}{\parbox[c]{1.4cm}{\centering EMNIST}}}
& FedAvg   & $81.8\pm0.1$ & $38.2\pm0.8$ & $70.6\pm0.4$ & $81.1\pm0.1$ & $49.7\pm1.3$ & $42.0\pm0.9$ & $74.4\pm0.0$ & $85.2\pm0.1$ \\
& q-FedAvg & $50.3\pm0.5$ & -- & $40.6\pm0.7$ & $47.6\pm0.5$ & -- & -- & $41.4\pm1.2$ & $53.3\pm0.5$ \\
\cmidrule(lr){2-10}
& CA2FL    & $\mathbf{78.2\pm0.1}$ & \ul{$25.7\pm0.2$}     & $\mathbf{65.4\pm0.3}$ & $\mathbf{76.9\pm0.1}$ & \ul{$36.9\pm0.4$}     & \ul{$29.3\pm0.2$}     & \ul{$68.0\pm0.7$}     & \ul{$82.0\pm0.4$}     \\
& FedBuff  & $77.1\pm1.0$          & $13.4\pm0.5$          & $61.1\pm1.1$          & $75.0\pm1.4$          & $21.1\pm0.3$          & $15.7\pm0.4$          & $65.0\pm0.8$          & $81.6\pm1.1$          \\
& FedFa  & \ul{$77.5\pm0.8$}     & $6.0\pm4.0$           & $62.0\pm0.5$          & $75.3\pm1.0$    & $10.5\pm6.3$          & $7.2\pm4.7$           & $63.5\pm1.1$          & $\mathbf{82.3\pm1.2}$ \\
& \textbf{FedRAW} & $76.6\pm1.1$          & $\mathbf{30.2\pm4.2}$ & \ul{$63.4\pm1.5$}     & \ul{$75.3\pm1.2$}          & $\mathbf{37.9\pm3.3}$ & $\mathbf{32.7\pm3.8}$ & $\mathbf{68.4\pm0.3}$ & $80.0\pm1.5$          \\
\midrule

\multirow{6}{*}{\rotatebox[origin=c]{90}{\parbox[c]{1.4cm}{\centering CIFAR-10}}}
& FedAvg   & $57.3\pm1.4$  & $53.2\pm10.1$ & $98.3\pm1.1$ & $57.0\pm1.1$ & $49.5\pm5.1$  & $51.6\pm7.8$  & $55.4\pm7.6$  & $54.9\pm0.7$ \\
& q-FedAvg  & $9.9\pm0.9$   & $40.0\pm12.0$ & --  & $2.8\pm1.4$  & $8.0\pm1.2$   & $15.4\pm2.9$  & $26.3\pm8.7$  & $3.0\pm4.7$  \\
\cmidrule(lr){2-10}
& CA2FL   & $\mathbf{55.7\pm1.0}$ & $\mathbf{44.9\pm4.6}$  & $\mathbf{88.8\pm5.8}$  & $\mathbf{55.2\pm0.7}$ & $\mathbf{45.4\pm3.7}$  & $\mathbf{45.0\pm4.2}$  & $\mathbf{49.9\pm5.2}$  & \ul{$55.5\pm0.3$}     \\
& FedBuff   & $52.2\pm0.8$          & $20.5\pm3.7$           & $62.9\pm5.4$           & $50.6\pm0.3$          & $29.6\pm3.7$           & $23.3\pm3.7$           & $35.0\pm2.4$           & $\mathbf{57.7\pm3.0}$ \\
& FedFa   & $50.1\pm1.4$          & $22.0\pm9.3$           & $62.7\pm9.0$           & $48.4\pm1.6$          & $28.2\pm7.5$           & $23.9\pm8.7$           & $34.3\pm5.2$           & $54.9\pm1.3$          \\
& \textbf{FedRAW} & \ul{$52.8\pm2.5$}     & \ul{$33.8\pm19.0$}     & \ul{$76.9\pm21.0$}     & \ul{$51.6\pm3.3$}     & \ul{$35.9\pm15.4$}     & \ul{$34.3\pm17.8$}     & \ul{$42.1\pm11.9$}     & $54.8\pm2.5$          \\
\midrule

\multirow{6}{*}{\rotatebox[origin=c]{90}{\parbox[c]{1.4cm}{\centering HAM10000}}}
& FedAvg   & $81.0\pm1.1$ & $64.8\pm6.9$  & $61.4\pm8.1$ & $66.0\pm3.1$ & $66.4\pm3.9$  & $65.2\pm5.8$  & $74.2\pm8.3$  & $74.4\pm2.9$ \\
& q-FedAvg  & $63.5\pm2.5$ & $20.3\pm5.1$  & $33.2\pm4.9$ & $39.1\pm3.8$ & $27.1\pm7.1$  & $22.5\pm5.7$  & $53.6\pm9.0$  & $59.3\pm0.5$ \\
\cmidrule(lr){2-10}
& CA2FL        & $\mathbf{76.2\pm1.5}$ & $\mathbf{45.3\pm15.5}$ & $\mathbf{61.2\pm10.5}$ & \ul{$59.1\pm4.0$}     & $\mathbf{50.0\pm10.0}$ & $\mathbf{46.8\pm13.8}$ & $\mathbf{69.1\pm8.6}$  & $72.1\pm4.2$          \\
& FedBuff     & $75.1\pm1.3$          & $12.9\pm15.6$          & $49.7\pm7.2$           & $48.0\pm5.2$          & $16.1\pm17.8$          & $13.9\pm16.4$          & $65.3\pm8.0$           & \ul{$73.4\pm2.8$}     \\
& FedFa    & $75.3\pm3.1$          & $12.8\pm14.2$          & $52.5\pm4.0$           & $48.8\pm5.2$          & $15.2\pm14.0$          & $13.5\pm14.3$          & $66.0\pm11.6$          & $\mathbf{73.6\pm2.8}$ \\
& \textbf{FedRAW} & \ul{$75.7\pm2.0$}     & \ul{$43.2\pm17.7$}     & \ul{$57.9\pm6.6$}      & $\mathbf{59.5\pm4.9}$ & \ul{$49.9\pm14.0$}     & \ul{$45.2\pm16.7$}     & \ul{$67.5\pm7.8$}      & $73.2\pm2.0$          \\
\midrule

\multirow{6}{*}{\rotatebox[origin=c]{90}{\parbox[c]{1.4cm}{\centering ISIC-2019}}}
& FedAvg   & $73.0\pm0.9$  & $64.9\pm3.2$  & $77.8\pm14.2$ & $60.4\pm1.0$ & $62.3\pm2.9$  & $63.8\pm3.1$  & $74.6\pm2.2$  & $70.0\pm1.8$  \\
& q-FedAvg  & $24.2\pm21.5$ & $12.3\pm13.3$ & --   & $7.0\pm4.5$  & --  & $1.5\pm0.9$   & $23.4\pm20.2$ & $18.5\pm15.5$ \\
\cmidrule(lr){2-10}
& CA2FL          & $67.1\pm0.4$          & \ul{$42.0\pm0.7$}     & $64.3\pm2.1$          & \ul{$52.9\pm0.4$}     & \ul{$45.3\pm1.4$}     & \ul{$43.0\pm0.9$}     & \ul{$65.4\pm0.7$}     & $65.8\pm1.0$          \\
& FedBuff    & $\mathbf{69.4\pm1.2}$ & $22.3\pm2.5$          & $68.7\pm2.4$          & $51.0\pm0.2$          & $32.3\pm3.6$          & $25.4\pm2.8$          & $65.3\pm2.8$          & $\mathbf{68.9\pm1.7}$ \\
& FedFa     & $68.0\pm1.7$          & $29.6\pm1.9$          & $\mathbf{71.1\pm2.8}$ & $52.7\pm1.8$          & $39.1\pm6.3$          & $32.7\pm3.1$          & $63.9\pm3.3$          & $67.1\pm1.8$          \\
& \textbf{FedRAW} & \ul{$68.4\pm2.1$}     & $\mathbf{47.6\pm1.6}$ & \ul{$70.0\pm2.4$}     & $\mathbf{55.2\pm1.3}$ & $\mathbf{49.5\pm6.0}$ & $\mathbf{48.0\pm3.7}$ & $\mathbf{66.8\pm5.2}$ & \ul{$67.2\pm1.7$}     \\
\bottomrule
\end{tabular}%
}
\end{table*}


\subsubsection{Baselines}

We compare FedRAW with synchronous, asynchronous, cached-update,
and fairness-aware baselines. \textbf{FedAvg}~\cite{mcmahan2017}
is a synchronous reference training all clients each round.
\textbf{FedBuff}~\cite{nguyen2022} and \textbf{FedFa}~\cite{yi2024}
are the main asynchronous baselines, both using server-side
buffers with uniform, rarity-agnostic aggregation.
\textbf{CA2FL}~\cite{wang2024ca2fl} is a cached-update
asynchronous baseline that maintains previously received updates
so all clients contribute during aggregation.
\textbf{q-FedAvg}~\cite{li2020qfed} is a synchronous fairness-aware
baseline reweighting clients by their loss raised to power $q$; it
targets device-level loss parity rather than label-coverage
scarcity, and is included to test whether loss-based reweighting
incidentally recovers rare-label influence.

Additional experiments on scalability, alternative rare-label sets, hyperparameter sensitivity, Dirichlet non-IID partitioning, visually similar rare classes, and further ablations are provided in the supplementary material.

\subsubsection{Evaluation Metrics}

We report metrics in three categories. \textbf{GlobalAcc} is top-1
accuracy on the held-out global test set; because that set
contains many more common-class than rare-class samples, it can
remain high even when rare labels are poorly learned. To capture
this hidden failure we report \textbf{AvgRare}, the mean per-class
accuracy over rare labels only (e.g. {44,45,46} on EMNIST
Balanced), which is our primary metric for rare-label
preservation.

\textbf{F1-based rare-class detection.} \textbf{MacroF1} averages
F1 equally across classes, making it more sensitive to rare-class
failure than GlobalAcc since each class contributes equally
regardless of test-set frequency. \textbf{RareF1} averages F1 over
rare labels only, and \textbf{RareF2} is the corresponding
$\beta{=}2$ score, weighting recall more heavily -- useful when
missed rare-class detections cost more than false positives.

\textbf{Client-level utility.} \textbf{Worst-10\%} is the mean
accuracy of the bottom 10\% of clients ranked by local test
performance, measuring whether some clients remain structurally
disadvantaged. \textbf{Local Rare} and \textbf{Local Common} are
the accuracy of the global model on held-out local test sets of
rare-label and common-label clients, measuring utility from the
clients' own distributions rather than the global one.

\subsection{Performance Comparison with Baselines}

Table~\ref{tab:main_results_all} compares FedRAW with synchronous, asynchronous, cached-update, and fairness-aware baselines across EMNIST Balanced, CIFAR-10, HAM10000, and ISIC-2019. All results are averaged over three seeds unless stated otherwise. 
Fig.~\ref{fig:all_graphs} shows the corresponding global accuracy curves under correlated speed.

\begin{figure}[t]
    \centering
    \includegraphics[width=\columnwidth]{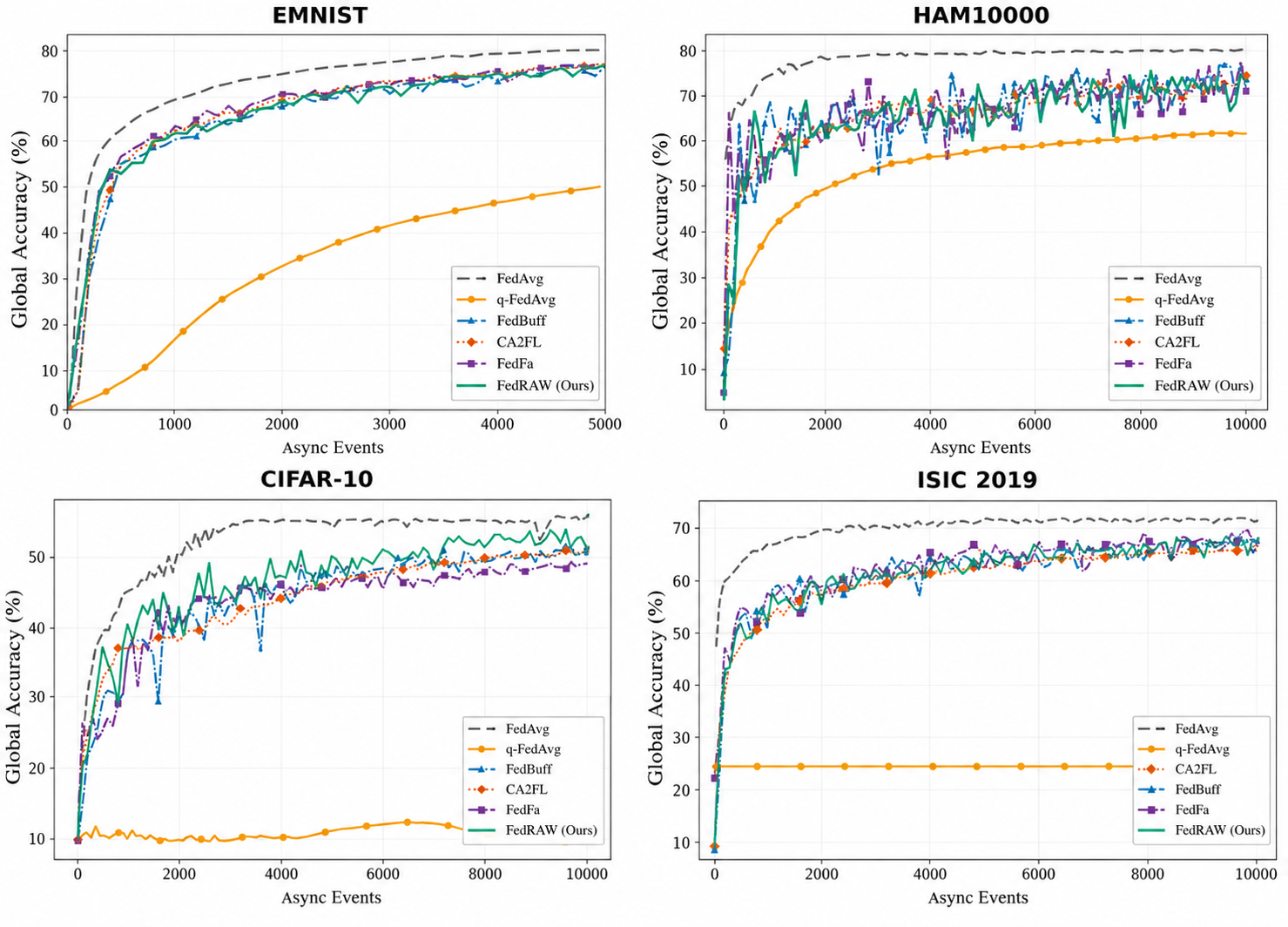}
    \caption{Global accuracy curves for all datasets results under correlated speed ($K$=10, $N$=30, seed~42).
    ~Global accuracy vs.\ async events.}
    \label{fig:all_graphs}
\end{figure}

\subsubsection{Rare-label accuracy}
FedRAW consistently improves rare-label accuracy over standard
asynchronous baselines: AvgRare reaches 30.2\% on EMNIST Balanced
against 6.0\% for FedFa and 13.4\% for FedBuff, and 33.8\% on
CIFAR-10 against 22.0\% and 20.5\%. CIFAR-10 is a substantially
harder representation-learning problem than EMNIST -- natural
images bring stronger visual complexity, richer texture variation,
and higher inter-class similarity -- which raises the difficulty of
recovering rare-label representations from few client updates. The gains are stronger on the medical datasets, where AvgRare improves over FedFa from 12.8\% to 43.2\% on HAM10000 and from 29.6\% to
47.6\% on ISIC-2019, confirming that the failure, and the
correction, generalize to realistic settings with naturally
underrepresented classes. The relatively high variance in the HAM10000 rare-label metrics is due to the
seed-dependent client data partition and randomness in client update arrival times.
Despite this variability, FedRAW consistently outperforms FedFa and FedBuff
across all three seeds, while remaining competitive with CA2FL; a seed-wise analysis is provided in the Supplementary Section S-VII.

\subsubsection{Rare-class detection quality}
The F1-based metrics confirm that FedRAW recovers rare-class
predictions rather than merely shifting overall accuracy. Over
FedFa, RareF1 improves from 10.5\% to 37.9\% on EMNIST Balanced,
28.2\% to 35.9\% on CIFAR-10, 15.2\% to 49.9\% on HAM10000, and
39.1\% to 49.5\% on ISIC-2019, with RareF2 improving in step
(7.2\% to 32.7\%, 23.9\% to 34.3\%, 13.5\% to 45.2\%, 32.7\% to
48.0\%). Since RareF2 weights recall more heavily, these gains
indicate fewer missed rare-class detections, the main failure mode
under uniform asynchronous aggregation.

\subsubsection{Global accuracy trade-off}
FedRAW improves rare-label performance while maintaining comparable global accuracy. Relative to FedFa, AvgRare improves by $+24.2$\,pp on EMNIST Balanced, $+11.8$\,pp on CIFAR-10, $+30.4$\,pp on HAM10000, and $+18.0$\,pp on ISIC-2019, while GlobalAcc changes by only $-0.9$\,pp, $+2.7$\,pp, $+0.4$\,pp, and $+0.4$\,pp respectively.

\subsubsection{Client-level utility}
The gains extend to the global model's usefulness for rare-label clients themselves. Relative to FedFa, LocalRare rises from 63.5\% to 68.4\% on EMNIST Balanced, 34.3\% to 42.1\% on CIFAR-10, and 63.9\% to 66.8\% on ISIC-2019, while on HAM10000 the LocalRare--LocalCommon gap narrows from 7.6\,pp to 5.7\,pp.

\subsubsection{Participation coverage versus aggregation influence}

CA2FL is a strong asynchronous baseline because its cached-update
mechanism guarantees that every client contributes to every aggregation.
Its weights remain uniform, so each client receives the same $1/30$
influence. On CIFAR-10, airplane and frog are rare by client coverage
but not by sample count (5{,}000 images each globally); rare clients
therefore hold substantially more data than common clients
(2{,}800--3{,}600 vs.\ 1{,}069 samples), explaining CA2FL's strong
AvgRare despite uniform weighting. On HAM10000, CA2FL and FedRAW are
comparable ($45.3\%\pm15.5$ vs.\ $43.2\%\pm17.7$), with high variance
reflecting rare-class test sets of only 23--28 images. On ISIC-2019, the
dominant class outnumbers rare classes by 54:1, causing common clients
to contribute 26/30 ($86.7\%$) of the uniform aggregation influence,
while rare clients contribute only 4/30 ($13.3\%$). FedRAW amplifies
rare-client influence above the uniform baseline, addressing aggregation
bias when it is the dominant bottleneck, while using $3\times$ less
server-side memory than CA2FL's per-client cache.

CA2FL and FedRAW therefore represent two complementary strategies:
participation preservation versus influence preservation. CA2FL preserves
client participation through cached updates, allowing clients that
participate less frequently to retain influence in subsequent
aggregations. In contrast, FedRAW focuses on increasing the influence of
low-coverage clients when their updates reach the aggregation buffer.
This distinction also helps explain why the relative performance of the
two methods varies across datasets.

\subsubsection{Device-level fairness is insufficient} 

q-FedAvg is a synchronous method; we evaluate it on the exact same data partitions used for our asynchronous experiments as a sanity check to verify whether device-level reweighting incidentally helps with label-coverage scarcity, not a like-for-like baseline. On EMNIST Balanced it
obtains only 0.1\% AvgRare and 0.1\% RareF1 while GlobalAcc drops
to 50.3\%, and on ISIC-2019 it collapses to 12.3\% AvgRare and
24.2\% GlobalAcc. A single-seed sweep over $q \in \{0.1, 1.0, 5.0\}$
(Supplementary Section~S-III) explores the sensitivity of q-FedAvg to q and shows that the observed degradation persists across the tested q values on seed 42: at $q{=}0.1$ the method approaches FedAvg behaviour with
32.8\% AvgRare, at $q{=}1.0$ both GlobalAcc and AvgRare degrade
further, and at $q{=}5.0$ AvgRare collapses to near zero.
Device-level fairness and label-coverage rarity are thus distinct
problems, and FedRAW's effectiveness is established against the
asynchronous baselines in
Table~\ref{tab:main_results_all}, not by this comparison.

\subsubsection{Effect of rare-client availability}
FedRAW is most effective when rare-label clients enter the buffer
often enough for rarity-aware weighting to act, which explains the
higher variance on CIFAR-10 and HAM10000, where rare-client
participation and rare-class sample counts are more limiting. On
CIFAR-10 it improves over FedFa but with larger variance, while
CA2FL benefits from guaranteed cached participation; the greater
visual complexity and inter-class overlap mean that amplified
rare-label signal must still compete against stronger
common-class representations. On HAM10000 the very small rare test
set makes AvgRare sensitive to a handful of misclassifications.
These results identify an operational boundary: FedRAW amplifies
rare-label signal once rare clients are in the buffer, but cannot
recover information from clients that rarely arrive, motivating
the analyses in Section~\ref{subsec:staleness}. Results for rare
classes with strong inter-class visual similarity are in
Supplementary Section~S-V.

\begin{table}[t]
\centering
\caption{Component ablation under correlated speed
($K$=10, $N$=30, 3~seeds).
BufPres: fraction of aggregations with at least one rare client
in the buffer.}
\label{tab:component_ablation}
\renewcommand{\arraystretch}{1.0}
\resizebox{\columnwidth}{!}{%
\begin{tabular}{llcccc}
\toprule
\textbf{Dataset} &
\textbf{Algorithm} &
\textbf{GlobalAcc(\%)} &
\textbf{AvgRare(\%)} &
\textbf{RareF1(\%)} &
\textbf{BufPres(\%)} \\
\midrule
\multirow{4}{*} {EMNIST}
  & FedFa               & $77.5 \pm 0.8$ & $6.0  \pm 4.0$ & $10.5 \pm 6.3$ & $60.2 \pm 6.5$ \\
  & FedRAW (DedupOnly)  & $77.0 \pm 0.5$ & $7.9  \pm 0.9$ & $13.9 \pm 1.5$ & $\mathbf{73.7 \pm 0.9}$$\uparrow$ \\
  & FedRAW (WeightOnly) & $77.9 \pm 0.3$ & $\mathbf{35.2 \pm 5.1}$$\uparrow$ & $\mathbf{44.6 \pm 5.1}$$\uparrow$ & $59.9 \pm 5.2$ \\
  & FedRAW (Both)       & $76.6 \pm 1.1$ & $30.2 \pm 4.2$$\uparrow$ & $37.9 \pm 3.3$$\uparrow$ & $72.7 \pm 3.5$$\uparrow$ \\
\midrule
\multirow{4}{*} {HAM10000}
  & FedFa               & $75.3 \pm 3.1$ & $12.8 \pm 14.2$ & $15.2 \pm 14.0$ & $46.4 \pm 2.2$ \\
  & FedRAW (DedupOnly)  & $76.6 \pm 1.6$ & $15.4 \pm 10.5$ & $22.2 \pm 13.2$ & $57.7 \pm 2.1$$\uparrow$ \\
  & FedRAW (WeightOnly) & $75.3 \pm 2.3$ & $\mathbf{53.8 \pm 11.5}$$\uparrow$ & $\mathbf{52.5 \pm 6.2}$$\uparrow$ & $46.6 \pm 2.1$ \\
  & FedRAW (Both)       & $75.7 \pm 2.0$ & $43.2 \pm 17.7$$\uparrow$ & $49.9 \pm 14.0$$\uparrow$ & $\mathbf{59.0 \pm 2.0}$$\uparrow$ \\
\bottomrule
\end{tabular}
}
\end{table}

\begin{table}[t]
\centering
\caption{Staleness, Participation, and Rare-Class Analysis Across Speed
Ratios ($N=30$, $K=10$, Seed~42). Speed ratio = ratio of
rare to common client average update time. Expected rare participation = 20\% = (6 rare clients / 30 total).}
\label{tab:staleness}
\resizebox{\columnwidth}{!}{
\begin{tabular}{llcccccc}
\toprule
\textbf{Speed} &
\textbf{Method} &
\shortstack{\textbf{Global}\\\textbf{Acc}} & 
\shortstack{\textbf{Rare}\\\textbf{Part.}} & 
\shortstack{\textbf{Rare}\\\textbf{Mean}\\\textbf{Stale}} & 
\shortstack{\textbf{Max}\\\textbf{Stale}} & 
\shortstack{\textbf{Avg}\\\textbf{Rare}} &
\shortstack{\textbf{Buffer}\\\textbf{Presence}} \\
 
\midrule

\multirow{2}{*}{2$\times$}
& FedFA & 77.95\% & 10.34\% 
& 56.0  
& 336  
& 9.92\%  & 67.6\% \\
& FedRAW & 77.44\% & 9.44\%  
& 61.9  
& 340 
& \textbf{21.58\%}$\uparrow$ & \textbf{76.7\%}$\uparrow$ \\

\midrule

\multirow{2}{*}{10$\times$}
& FedFA & 77.25\% & 2.66\% 
& 214.6 
& 1064 
& 0.25\% & 23.8\% \\
& FedRAW & 74.35\% & 2.64\% 
& 218.0 
& 976  
& \textbf{4.75\%}$\uparrow$ & \textbf{33.8\%}$\uparrow$ \\

\midrule

\multirow{2}{*}{20$\times$}
& FedFA & 76.43\% & 1.20\% 
& 464.9 
& 2367 
& 0.00\% & 11.6\% \\
& FedRAW & 73.59\% & 1.34\% 
& 417.2 
& 2185 
& \textbf{0.83\%}$\uparrow$ & \textbf{17.9\%}$\uparrow$ \\

\bottomrule
\end{tabular}
}
\end{table}

\subsection{Ablation Study}
\label{subsec:ablation}

We examine which components of FedRAW are responsible for
rare-label recovery, how the method behaves under different buffer
sizes and rare-label densities, and whether its gains are specific
to label-coverage scarcity.

\subsubsection{Effect of Deduplication and Rarity Weighting}
\label{subsubsec:ablation_components}

We isolate the two mechanisms under the correlated-speed setting
($N=30$, $K=10$, 3 seeds). \textbf{FedRAW-DedupOnly} applies
deduplication with uniform $1/K$ weights;
\textbf{FedRAW-WeightOnly} applies rarity-weighted aggregation
without deduplication (Table~\ref{tab:component_ablation}).

\emph{Rarity weighting is the main driver of rare-label recovery; deduplication provides structural reliability}.
The two mechanisms contribute asymmetrically. Rarity-aware
weighting alone recovers 35.2\% AvgRare on EMNIST and 53.8\% on
HAM10000, accounting for the large majority of FedRAW's gains over
FedFa (6.0\% and 12.8\%), whereas deduplication alone yields only
7.9\% and 15.4\%. This reflects their different roles: weighting
directly amplifies rare-label influence at aggregation time, while
deduplication improves the structural reliability of rare-client
buffer presence (BufPres rises from 59.9\% to 72.7\% on EMNIST and
46.6\% to 59.0\% on HAM10000), reducing dependence on stochastic
buffer events and making the gains from weighting more consistent
across seeds. Full FedRAW combines both: weighting without
deduplication is effective on average but more seed-dependent,
deduplication without weighting is structurally useful but
insufficient alone, together they give reliable server-side
correction.

\subsubsection{Effect of Buffer Size}
\label{subsubsec:buffer_size}

We next sweep the buffer size $K \in \{3,5,7,10,11,20\}$ on EMNIST
Balanced under correlated speed (Supplementary Section~S-VII).
FedFa's rare-label performance generally degrades as the buffer
grows, with AvgRare dropping from 20.92\% at $K=3$ to 14.08\% at
$K=10$ and 13.92\% at $K=20$ -- consistent with the proposed
failure mechanism, since larger buffers give frequently arriving
common clients more opportunity to occupy positions and dilute
rare-client presence under uniform aggregation. FedRAW maintains
substantial gains across all tested sizes (28.42--42.17\%
AvgRare), indicating that both mechanisms remain effective beyond a
single choice of $K$.

\subsubsection{Effect of Rare-Label Density}
\label{subsubsec:rare_density}

Varying the number of clients holding each rare label from 1 to 10
on EMNIST Balanced (Supplementary Section~S-VII), AvgRare
increases from 18.25\% at one client per rare label to 52.75\% at
ten, following the rise in rare-client buffer presence, which
reaches 100\% by seven clients. The improvement is balanced rather
than precision-driven: RareF1 rises consistently with AvgRare
while MacroF1 stays within 1.5\,pp of GlobalAcc, indicating that
broader rare-label coverage improves rare-class recovery without
degrading common-class quality.

\subsubsection{Specificity to Label-Coverage Scarcity}
\label{subsubsec:coverage_specificity}

FedRAW is designed for label-coverage rarity, not generic non-IID
imbalance. To verify this, we evaluate a Dirichlet partition with
$\alpha=0.1$~\cite{hsu2019}. Under this split all clients hold all
labels, so label coverage becomes approximately uniform and
FedRAW's rarity scores collapse toward uniform weighting. In this
setting FedRAW shows no meaningful improvement over FedFa,
confirming that its benefit arises specifically when labels are
held by only a small number of clients. Full results are in
Supplementary Section~S-IV.

\subsubsection{Robustness to Rare-Label Choice and Scale}
\label{subsubsec:robustness_scale}

Changing the rare labels from {44,45,46} to {20,30,40} on EMNIST
Balanced, FedFa collapses to 4.0\% AvgRare while FedRAW recovers
17.3\% ($+13.3$\,pp), showing that the failure is not tied to
specific label identities but is a structural property of
asynchronous buffer aggregation under label-coverage scarcity.
Increasing the number of clients to $N=50$, FedRAW maintains a
$+15.2$\,pp AvgRare gain over FedFa (18.6\% vs.\ 3.4\%),
confirming effectiveness beyond the main $N=30$ setting (Supplementary Sections~S-II and~S-I).

\subsubsection{Robustness to Rarity Misreporting}

FedRAW's rarity score $S_i$ relies on client-reported label
statistics (Sec.~III-B) that the server does not verify. To probe
this, we designate one common-label client as an adversary
falsely claiming 90\% of its data belongs to rare label 44, while
leaving its actual data and training unchanged. Without defense
the attack costs 6.17\,pp AvgRare and 2.31\,pp GlobalAcc. A simple
clip-and-renormalize cap on $w_i$ does not enforce the intended
bound, since renormalization re-inflates the capped weight; we
instead pin any client exceeding $w_i > 0.3$ at exactly 0.3 and
redistribute only the freed mass across the remaining uncapped
clients, iterating until the bound holds everywhere. This
water-filling cap recovers 83.5\% of the GlobalAcc loss and 42.0\%
of the AvgRare loss. The residual gap stems from the attacker's
buffer occupancy rather than its per-event weight; the full
comparison, including the naive cap and occupancy statistics, is
in Supplementary Section~S-VI. A complete defense likely requires
combining weight capping with occupancy-aware safeguards, which we
leave to future work.

\subsection{Effect of Staleness under Extreme Speed Heterogeneity}
\label{subsec:staleness}

We examine whether staleness or participation collapse is the dominant failure mode under extreme system heterogeneity, extending the main EMNIST Balanced setting ($N=30$, $K=10$) by raising the rare/common average speed ratio from 2$\times$ to 10$\times$ and 20$\times$ with common-client speeds fixed at $[0.5,1.5]$ seconds. Table~\ref{tab:staleness} reports rare-client participation, staleness, and AvgRare for FedFa and FedRAW.

\paragraph{Rare-client participation collapses as speed
heterogeneity increases}
Rare-client participation drops from 9.44\% at 2$\times$ to 2.64\%
at 10$\times$ and 1.34\% at 20$\times$ under FedRAW, far below the
proportional share of 20\% (6 rare clients of 30), with a similar
drop under FedFa. At 20$\times$ each rare client submits only
about 11 updates over 5,000 events, leaving very little rare-label
signal for any aggregation method. Deduplication mitigates this by
improving rare-client buffer presence at every speed ratio
(\textbf{76.7}\% vs.\ 67.6\% at 2$\times$; \textbf{33.8}\% vs.\
23.8\% at 10$\times$; \textbf{17.9}\% vs.\ 11.6\% at 20$\times$),
preserving rare-client slots even as participation falls.

\paragraph{Rare-client staleness grows, but it is not the
main cause of collapse}
Rare-client mean staleness grows from 61.9 rounds at 2$\times$ to
218.0 at 10$\times$ and 417.2 at 20$\times$ under FedRAW, yet
GlobalAcc decreases gradually alongside it (77.44\%, 74.35\%, and
73.59\% respectively), a bounded $\sim$3.85\,pp drop across a
10$\times$ change in speed ratio -- indicating that buffered
aggregation dilutes the effect of individual stale updates.

\paragraph{Participation bias is the dominant bottleneck at
extreme heterogeneity}
At 20$\times$, FedFa's AvgRare collapses to 0.00\% while FedRAW
retains 0.83\%, itself a near-floor value, but FedRAW's relative
advantage widens sharply with speed ratio, from 2.2$\times$ at
2$\times$ (21.58\% vs.\ 9.92\%) to 19$\times$ at 10$\times$
(4.75\% vs.\ 0.25\%). FedRAW therefore more effectively exploits the rare-label signal that reaches the buffer even as that signal
itself vanishes, which identifies delay-aware or rarity-aware
client scheduling as a complementary direction for extreme
heterogeneity.

\section{Limitation}
\label{sec:limitation}
FedRAW amplifies updates from rare-label clients based on label
coverage, without accounting for feature-space relationships between
rare classes. This assumption may be less effective when rare classes
share strong visual features. We observe this boundary on CIFAR-10 with
Cat and Dog as rare labels: despite FedRAW maintaining the highest
rare-client buffer presence across all seeds, its mean rare-label
accuracy falls below competing methods. The failure is one not of
participation but of amplification instability: rarity weighting
amplifies both Cat and Dog clients simultaneously, intensifying their
competition during early training, so recovery depends on which rare
client first dominates the buffer (Supplementary Section~S-V). When
rare classes share strong visual features, participation-guarantee
mechanisms such as CA2FL, which use uniform aggregation weights, provide
more consistent recovery. Combining rarity-aware weighting with
inter-class similarity awareness is a direction for future work.

A second limitation concerns the one-time label summary collected
at initialization: it keeps the mechanism lightweight, but assumes
each client's label distribution is known upfront, making initialization
synchronous and harder to reconcile with dynamic membership or with
label distributions that drift during training. FedRAW can therefore
compensate for labels that are rare but present among participating
clients, not for labels absent at initialization or introduced later.
Incremental or re-estimated rarity scoring under dynamic participation
is a promising direction for future work.

\section{Conclusion}
We identified \emph{silent rarity failure}, a structural failure
mode in buffer-based asynchronous federated learning where
rare-label clients lose aggregation influence while global
accuracy remains largely unchanged. It arises from participation
bias, where rare-label clients enter the update buffer less
frequently, and aggregation bias, where uniform buffer aggregation
gives their updates no compensating influence once present. This
is a \emph{system-induced} imbalance, driven by asynchronous
buffer dynamics rather than by the global sample distribution, and
it can occur even when rare-label clients hold sufficient local
data -- making it distinct from, and complementary to, classical
class-imbalance and long-tail learning problems.

We proposed FedRAW, a fully server-side aggregation method
combining client-level buffer deduplication with rare-label-aware
weighting, which improves rare-label influence without modifying
local training, client objectives, or communication protocols,
requiring only a one-time client label summary at initialization.
Across EMNIST Balanced, CIFAR-10, HAM10000, and ISIC-2019, FedRAW
improves AvgRare over FedFa by $+24.2$\,pp, $+11.8$\,pp,
$+30.4$\,pp, and $+18.0$\,pp respectively at comparable global
accuracy. 
Compared with CA2FL, FedRAW achieves competitive rare-label performance across datasets, outperforming it when aggregation influence is the dominant bottleneck, while requiring only $O(K)$ rather than $O(N)$ server-side state.
F1-based and client-level metrics confirm genuine
rare-label recovery, and ablations show that rarity-aware
weighting drives it while deduplication improves structural
reliability by increasing rare-client buffer presence.

Future work will address the synchronous label-summary
initialization (Section~\ref{sec:limitation}) through incremental
or re-estimated rarity scoring, combine FedRAW with delay-aware or
rarity-aware client scheduling, and explore privacy-preserving
computation of label-coverage statistics.

 

\clearpage
\setcounter{section}{0}
\section*{Supplementary Material}


\newtheorem{corollary}{Corollary}
\newtheorem{assumption}{Assumption}
\newtheorem{remark}{Remark}


\renewcommand{\thesection}{S-\Roman{section}}
\renewcommand{\thesubsection}{S-\Roman{section}.\Alph{subsection}}
\renewcommand{\thetable}{S\arabic{table}}
\renewcommand{\thefigure}{S\arabic{figure}}

\textbf{Additional experiments:
(\textbf{S-I})~scalability to $N{=}50$ clients,
(\textbf{S-II})~generalisation to alternative rare-label sets,
(\textbf{S-III})~q-FedAvg sensitivity to the hyperparameter $q$,
(\textbf{S-IV})~FedRAW behaviour under Dirichlet non-IID
partitioning,
(\textbf{S-V})~CIFAR-10 results with visually similar rare
classes (cat and dog),
(\textbf{S-VI})~naive versus water-filling weight capping under
rarity misreporting,
(\textbf{S-VII})~buffer-size and rare-label-density ablations, and
(\textbf{S-VIII})~the full convergence analysis, containing the
complete proof of Theorem~1 of the main paper.
All experimental settings match those described in Section~V-A of the main paper unless otherwise noted.}

\section{Scalability: $N = 50$}
\label{supp:scalability}

Table~\ref{tab:supp_n50} extends the main EMNIST Balanced
experiment to $N{=}50$ clients ($K{=}10$, 3~seeds: 42, 123,
456), with 6 rare-label clients (2 per rare-label) and 44
common-label clients.
FedRAW maintains a $+15.2$\,pp AvgRare advantage over FedFa
($18.6\%$ vs.\ $3.4\%$) with GlobalAcc comparable to FedFa,
confirming that the method scales beyond the primary $N{=}30$
setting. The higher variance at $N{=}50$ ($\pm5.5$\,pp)
reflects the proportionally smaller rare-client pool: with only
6 rare clients out of 50, each participation event has a greater
impact on the seed-averaged outcome.

\begin{table}[ht]
\centering
\caption{Scalability results: $N{=}50$, $K{=}10$, 3~seeds
         (42, 123, 456). Best async result \textbf{bold}.}
\label{tab:supp_n50}
\small
\begin{tabular}{lcc}
\toprule
\textbf{Algorithm} & \textbf{GlobalAcc (\%)} & \textbf{AvgRare (\%)} \\
\midrule
FedAvg (sync)          & $76.35\pm0.30$ & $23.33\pm1.18$ \\
\midrule
FedBuff (async)        & $72.20\pm0.25$ & $3.69\pm1.77$  \\
FedFa   (async)        & $72.58\pm0.23$ & $3.44\pm0.94$  \\
\textbf{FedRAW (async)}& $\mathbf{70.12\pm1.31}$ &
                         $\mathbf{18.61\pm5.49}$ \\
\bottomrule
\end{tabular}
\end{table}

\section{Generalisation to Alternative Rare-Label Sets}
\label{supp:generalization}

To verify that silent rarity failure and FedRAW's recovery are
not artefacts of the specific label indices \{44, 45, 46\} used
in the main EMNIST experiments, we repeat the correlated-speed
evaluation with a different rare-label set \{20, 30, 40\} under
otherwise identical conditions ($N{=}30$, $K{=}10$, seed~42).

Table~\ref{tab:supp_generalization} shows that the failure
pattern is structural and label-agnostic: FedFa collapses to
$4.0\%$ AvgRare regardless of which labels are designated rare,
and FedRAW recovers $17.3\%$ ($+13.3$\,pp), confirming that
participation and aggregation biases arise from buffer dynamics
rather than any property specific to a particular label set.

\begin{table}[ht]
\centering
\caption{Generalisation to rare labels \{20, 30, 40\},
         EMNIST Balanced, correlated speed, $N{=}30$, $K{=}10$,
         seed~42. Best async result \textbf{bold}.}
\label{tab:supp_generalization}
\small
\resizebox{\columnwidth}{!}{%
\begin{tabular}{lccc}
\toprule
\textbf{Algorithm} & \textbf{GlobalAcc(\%)} &
\textbf{AvgRare(\%)} & \textbf{Worst-10\%} \\
\midrule
FedAvg (sync)           & 83.06 & 42.33 & 69.87 \\
\midrule
FedBuff (async)         & 76.78 & 14.08 & 58.80 \\
FedFa   (async)         & 78.27 &  4.00 & 57.62 \\
\textbf{FedRAW (async)} & \textbf{78.44} &
                          \textbf{17.33} &
                          \textbf{61.52} \\
\bottomrule
\end{tabular}
}
\end{table}

\section{q-FedAvg Sensitivity to Hyperparameter $q$}
\label{supp:qfedavg}

The main paper reports q-FedAvg results at $q{=}5$ across three
seeds. Table~\ref{tab:supp_qfedavg} reports a single-seed
($seed{=}42$) sweep across $q \in \{0.1, 1.0, 5.0\}$ on EMNIST
Balanced, with FedAvg and FedRAW included as reference points.

Three observations follow. First, at $q{=}0.1$ the method
reduces to near-FedAvg behaviour (GlobalAcc $81.3\%$, AvgRare
$32.8\%$), confirming that meaningful loss-based reweighting
requires $q \geq 1$. Second, at $q{=}1.0$ global accuracy drops
to $70.0\%$ with AvgRare $16.1\%$ -- worse on both metrics than
FedRAW. Third, at $q{=}5.0$ the method degenerates to near-zero
AvgRare ($0.1\%$) and $50.3\%$ GlobalAcc, confirming that
device-level loss reweighting is not well-suited to the
label-coverage rarity problem at any tested $q$ value. Three-seed
results for $q{=}5$ are reported in Table~II of the main paper;
results for $q{=}0.1$ and $q{=}1.0$ are not repeated across seeds,
as the single-seed sweep is sufficient to characterise the
sensitivity.

\begin{table}[ht]
\centering
\caption{q-FedAvg sensitivity to $q$, EMNIST Balanced,
         Same data split, $N{=}30$, $K{=}10$, seed~42.
         FedAvg and FedRAW shown as reference.}
\label{tab:supp_qfedavg}
\small
\begin{tabular}{lcccc}
\toprule
\textbf{Method} & \textbf{GlobalAcc} & \textbf{AvgRare} &
\textbf{RareF1} & \textbf{RareF2} \\
\midrule
FedAvg (sync)        & 81.7 & 38.8 & 50.4 & 42.8 \\
\midrule
q-FedAvg $q{=}0.1$  & 81.3 & 32.8 & 45.1 & 36.7 \\
q-FedAvg $q{=}1.0$  & 70.0 & 16.1 & 25.7 & 18.9 \\
q-FedAvg $q{=}5.0$  & 50.3 &  0.1 &  0.1 &  0.1 \\
\midrule
FedFa  (async)       & 76.5 & 11.6 & 18.9 & 13.9 \\
FedRAW (async)       & 75.5 & 24.7 & 32.6 & 27.8 \\
\bottomrule
\end{tabular}
\end{table}

\section{FedRAW Under Dirichlet Non-IID Partitioning}
\label{supp:dirichlet}

\subsection{Setup and Motivation}

To verify that FedRAW does not introduce accuracy or fairness
regression under general non-IID conditions, we evaluate on
Dirichlet-partitioned EMNIST Balanced with concentration
parameters $\alpha \in \{0.1, 0.5, 5.0\}$,
spanning highly skewed to near-uniform label distributions.

\subsection{Why FedRAW is Inert Under Dirichlet Partitioning}

Under Dirichlet partitioning every client receives samples from
all labels, so label coverage is approximately uniform:
$\text{coverage}(l) \approx N$ for all $l$.
Consequently, all rarity scores $S_i$ collapse to near-equal
values and the rarity-weighted aggregation reduces to
approximately uniform $1/K$ averaging, recovering standard
FedFa behaviour while retaining FedRAW's structural
deduplication constraint.
The condition FedRAW is designed to address — structural
label-coverage scarcity — does not exist in this setting,
so no compensation occurs and no degradation is expected.

\subsection{Results}

Table~\ref{tab:supp_dirichlet} reports GlobalAcc, MeanClient,
Worst-10\%, and Jain's Fairness Index across all three $\alpha$
values. FedRAW and FedFa converge at nearly identical rates
and reach the same final accuracy on all metrics, with
differences within noise. AvgRare is omitted because under
Dirichlet partitioning rare labels are not structurally defined
— the bottom-$k$ labels by sample frequency vary across seeds
and $\alpha$ values, making cross-run comparison of per-class
rare accuracy meaningless.

\subsection{Interpretation}

These results confirm two properties of FedRAW. First, the
method does not introduce accuracy or fairness degradation under general non-IID conditions: it
does not introduce any accuracy or fairness regression relative
to FedFa when coverage scarcity is absent. Second, the gains
reported in the long-tail experiments of the main paper are not
an artefact of the partitioning strategy — they arise
specifically from the structural label-coverage scarcity that
the long-tail split induces, which is precisely the condition
FedRAW is designed to address.

\begin{table}[t]
\centering
\caption{Impact of Dirichlet heterogeneity ($\alpha$) on model
         performance and fairness ($N{=}30$, $K{=}10$, seed~42).
         Higher values are better in all columns.}
\label{tab:supp_dirichlet}
\resizebox{\columnwidth}{!}{%
\begin{tabular}{clcccc}
\toprule
$\alpha$ & \textbf{Algorithm} &
\textbf{GlobalAcc (\%)} &
\textbf{MeanClient (\%)} &
\textbf{Worst-10\% (\%)} &
\textbf{Jain's Index} \\
\midrule
\multirow{4}{*}{0.1}
& FedAvg  & \textbf{81.71} & \textbf{82.88} & \textbf{68.94} & \textbf{0.9936} \\
& FedBuff & 76.48 & 77.65 & 60.87 & 0.9890 \\
& FedFa   & 77.39 & 79.36 & 65.97 & 0.9923 \\
& FedRAW  & 77.05 & 78.51 & 66.21 & 0.9917 \\
\midrule
\multirow{4}{*}{0.5}
& FedAvg  & \textbf{84.03} & \textbf{85.59} & \textbf{77.72} & \textbf{0.9984} \\
& FedBuff & 81.66 & 82.79 & 73.77 & 0.9975 \\
& FedFa   & 82.34 & 83.50 & 72.50 & 0.9972 \\
& FedRAW  & 81.93 & 83.05 & 72.17 & 0.9971 \\
\midrule
\multirow{4}{*}{5.0}
& FedAvg  & \textbf{85.10} & \textbf{86.82} & \textbf{84.82} & \textbf{0.9998} \\
& FedBuff & 83.70 & 85.49 & 82.91 & 0.9997 \\
& FedFa   & 84.28 & 85.98 & 83.48 & 0.9997 \\
& FedRAW  & 83.54 & 85.01 & 82.56 & \textbf{0.9998} \\
\bottomrule
\end{tabular}
}
\end{table}

\section{CIFAR-10: Visually Similar Rare Labels (Cat \& Dog)}
\label{supp:cifar_catdog}

\subsection{Setup}

To examine FedRAW's behaviour when rare classes share visual
features, we evaluate CIFAR-10 with Cat~(label~3) and Dog
(label~5) designated as rare, using the same correlated-speed
setting as the main paper ($N{=}30$, $K{=}10$, seeds~42,
123,~456). This supplements the primary CIFAR-10 results
(airplane and frog as rare labels, Table~II of the main paper)
and is discussed in Section~VI of the main paper.

\subsection{Main Results}

Table~\ref{tab:supp_catdog_main} reports accuracy and F1
metrics across three seeds.
FedRAW's mean AvgRare ($14.4\% \pm 14.3$) falls below both
CA2FL ($26.5\% \pm 1.4$) and FedFa ($18.5\% \pm 1.5$),
with dramatically higher variance reflecting seed-dependent
winner-takes-all dynamics between the two visually similar
rare classes. This contrasts with the primary CIFAR-10 setting
(airplane and frog) where FedRAW leads all async baselines
by $+11.8$\,pp AvgRare.

\begin{table*}[ht]
\centering
\caption{CIFAR-10, Cat~(3) and Dog~(5) as rare labels,
         correlated speed, $K{=}10$, $N{=}30$, 3~seeds
         (42, 123, 456). High FedRAW variance reflects
         seed-dependent winner-takes-all dynamics between
         visually similar rare classes. Best async result
         \textbf{bold}.}
\label{tab:supp_catdog_main}
\resizebox{\textwidth}{!}{%
\begin{tabular}{lcccccc}
\toprule
\textbf{Algorithm} &
\textbf{GlobalAcc (\%)} &
\textbf{AvgRare (\%)} &
\textbf{MacroF1 (\%)} &
\textbf{RareF1 (\%)} &
\textbf{Local Rare (\%)} &
\textbf{Local Common (\%)} \\
\midrule
FedAvg (sync)   & $58.5\pm0.6$ & $39.2\pm1.4$ & $58.4\pm0.3$
                & $35.3\pm2.4$ & $46.5\pm1.1$ & $60.7\pm1.1$ \\
q-FedAvg (sync) & $10.0\pm0.0$ & $0.0\pm0.0$  & $1.8\pm0.0$
                & $0.0\pm0.0$  & $5.9\pm1.2$  & $11.9\pm0.7$ \\
\midrule
\textbf{CA2FL (async)} & $\mathbf{56.3\pm1.2}$ & $\mathbf{26.5\pm1.4}$
                & $\mathbf{55.4\pm1.1}$ & $\mathbf{29.0\pm1.5}$
                & $\mathbf{39.0\pm1.4}$ & $\mathbf{61.7\pm1.0}$ \\
FedBuff (async) & $57.5\pm0.4$ & $17.8\pm4.3$ & $55.0\pm0.5$
                & $20.0\pm4.2$ & $35.0\pm2.7$ & $65.3\pm0.7$ \\
FedFa   (async) & $54.3\pm0.3$ & $18.5\pm1.5$ & $52.6\pm0.6$
                & $22.5\pm2.8$ & $34.7\pm2.1$ & $61.1\pm0.5$ \\
FedRAW  (async) & $54.6\pm0.4$ & $14.4\pm14.3$ & $51.1\pm1.5$
                & $12.0\pm7.5$ & $35.5\pm9.2$  & $61.1\pm4.8$ \\
\bottomrule
\end{tabular}
}
\end{table*}

\subsection{Per-Seed Breakdown and Buffer Presence}

Table~\ref{tab:supp_catdog_detail} shows per-seed AvgRare and
rare-client buffer presence (BufPres) for FedFa, CA2FL, and
FedRAW. Notably, FedRAW achieves the \emph{highest} BufPres
of all async methods on every seed ($56.7\%$, $53.4\%$,
$58.2\%$), confirming that the failure is not caused by reduced
rare-client participation. On seed~42, FedRAW achieves
$34.4\%$ AvgRare (highest of all methods including FedAvg),
but collapses to $1.4\%$ and $7.5\%$ on seeds~123 and~456.
This seed-dependence is consistent with a winner-takes-all failure mode: rarity-weighted amplification intensifies
gradient conflict between two classes sharing visual features,
and the outcome is determined by which rare client happened
to dominate the buffer during early training.

\begin{table}[ht]
\centering
\small
\caption{Per-seed AvgRare and rare-client buffer presence
         (BufPres) for the Cat+Dog setting. FedRAW achieves
         the highest BufPres on every seed, confirming the
         failure is not caused by reduced participation.}
\label{tab:supp_catdog_detail}
\renewcommand{\arraystretch}{1.2}
\begin{tabular}{lcccccc}
\toprule
& \multicolumn{3}{c}{\textbf{AvgRare (\%)}} &
  \multicolumn{3}{c}{\textbf{BufPres (\%)}} \\
\cmidrule(lr){2-4}\cmidrule(lr){5-7}
\textbf{Algorithm} &
\textbf{S42} & \textbf{S123} & \textbf{S456} &
\textbf{S42} & \textbf{S123} & \textbf{S456} \\
\midrule
FedFa  & 16.7 & 18.4 & 20.4 & 45.2 & 45.0 & 45.5 \\
CA2FL  & 24.6 & 28.0 & 26.9 & 47.4 & 41.4 & 45.6 \\
FedRAW & 34.4 &  1.4 &  7.5 & 56.7 & 53.4 & 58.2 \\
\bottomrule
\end{tabular}
\end{table}

\subsection{Confusion Matrix Analysis}

Table~\ref{tab:supp_catdog_cm} shows the mean confusion matrix
rows for Cat and Dog, averaged across three seeds.
Under FedFa, dog samples are predominantly predicted as dog
(440 correct vs.\ 97 misclassified as cat), reflecting
reasonable class separation under uniform weighting.
Under FedRAW, the dominant error reverses: 356 of 1000 dog
test samples are predicted as cat on average across seeds
(698 on seed~42 alone), while cat recall remains comparable.
This asymmetric collapse confirms the gradient interference
hypothesis: FedRAW's ${\sim}4\times$ rarity weighting
amplifies gradient conflict between visually overlapping
classes, producing a seed-dependent winner in which one
rare class erases the other rather than both recovering.

\begin{table}[ht]
\centering
\small
\caption{Mean confusion matrix rows for Cat~(3) and Dog~(5),
         averaged across 3~seeds. Classes:
         0=airplane, 1=auto, 2=bird, 3=cat, 4=deer,
         5=dog, 6=frog, 7=horse, 8=ship, 9=truck.
         Bold entries indicate the predicted class receiving
         the most samples for each true class.}
\label{tab:supp_catdog_cm}
\renewcommand{\arraystretch}{1.2}
\resizebox{\columnwidth}{!}{%
\begin{tabular}{llcccccccccc}
\toprule
& \textbf{True} &
\textbf{0} & \textbf{1} & \textbf{2} & \textbf{3} &
\textbf{4} & \textbf{5} & \textbf{6} & \textbf{7} &
\textbf{8} & \textbf{9} \\
\midrule
\multirow{2}{*}{FedFa}
& Cat & 14  & 20 & 118 & \textbf{221} & 85  & 97          & 93  & 79  & 29 & 17 \\
& Dog & 31  & 12 & 120 & 97           & 87  & \textbf{440}& 54  & 106 & 25 & 14 \\
\midrule
\multirow{2}{*}{FedRAW}
& Cat & 33  & 20 & 24  & \textbf{222} & 41  & 6           & 125 & 42  & 32 & 23 \\
& Dog & 47  & 16 & 201 & \textbf{356} & 140 & 33          & 66  & 181 & 30 & 15 \\
\bottomrule
\end{tabular}
}
\end{table}

\section{Rarity Misreporting: Naive vs Water-Filling weight capping}
\label{sec:rarity_misreporting}

\subsection{Setup}

Section~V-C6 of the main paper introduces an adversarial client that falsely reports
90\% of its data as belonging to Label 44 while its actual local data
and training remain unchanged, and shows that a water-filling weight
cap recovers most of the resulting accuracy loss. Here we report the
full comparison, including the naive clip-and-renormalize cap that
motivated the water-filling design, and the mechanism behind the
residual gap, under identical settings (EMNIST Balanced, correlated
speed, $N{=}30$, $K{=}10$, seed 42).

\subsection{Why the Naive Cap Fails}

Given normalized weights $w_i$ and a cap $c$, the naive approach
clips every weight exceeding $c$ and renormalizes the entire buffer
by the same factor:
\[
w_i' = \frac{\min(w_i, c)}{\sum_{j \in B} \min(w_j, c)}.
\]
Because the denominator divides \emph{every} weight -- including
ones already clipped to $c$ -- by a factor less than 1 whenever any
clipping occurs, a capped client's final weight $w_i'$ can exceed the
nominal cap $c$. In our attack, the adversary's pre-cap weight
averages 0.466; clipping to $c{=}0.3$ and uniformly renormalizing
raises its realized weight back to 0.317--0.388 (mean 0.361) --
above the intended ceiling.

The water-filling cap instead pins any client exceeding $c$ at
exactly $c$, and redistributes only the freed probability mass across
clients that remain below $c$, repeating until no weight in the
buffer exceeds $c$. This guarantees $w_i' \le c$ for every buffered
client, at the cost of a small iterative loop (at most $K$ rounds per
aggregation).

\subsection{Results}

Table~\ref{tab:naive_vs_waterfill} reports the full accuracy
comparison across four conditions -- no attack, attack with no
defense, attack with the naive cap, and attack with the water-filling
cap -- together with the attacker's realized weight and each
condition's buffer-occupancy statistics, computed from the
per-aggregation weight log across all 4{,}989 buffer-full events.

The naive cap recovers only 13.5\% of the AvgRare loss and 56\% of
the GlobalAcc loss, consistent with its failure to enforce the bound.
The water-filling cap recovers 42.0\% of the AvgRare loss and 83.5\%
of the GlobalAcc loss -- roughly $3\times$ the naive cap's recovery on
every accuracy metric reported.

The occupancy columns explain why even a perfectly enforced weight
cap leaves a residual AvgRare gap. Weight capping bounds the
\emph{magnitude} of a client's realized weight but does not affect
\emph{whether} it occupies the buffer: the attacker holds the
maximum-weight slot in 53.8\% of aggregations under both the
undefended and water-filling-capped conditions -- identical in both
cases -- while genuine rare clients hold the maximum slot in only
37.2\% of aggregations once the attacker is present, down from 76.7\%
in the unattacked run.

\begin{table}[ht]
\centering
\caption{Naive vs.\ Water-Filling Weight Cap under Rarity
Misreporting. EMNIST Balanced, Correlated Speed, $N{=}30$, $K{=}10$,
Seed 42. Attacker weight reports the client's realized aggregation
weight whenever it holds the buffer maximum. Occupancy columns report
the frequency, across all 4{,}989 aggregation events, with which the
buffer maximum is held by a genuine rare client vs.\ the attacker;
capping affects weight magnitude only, not buffer occupancy.}
\label{tab:naive_vs_waterfill}
\resizebox{\columnwidth}{!}{%
\begin{tabular}{lccccc}
\toprule
Condition & GlobalAcc(\%) & AvgRare(\%) & Attacker $w_i$ & Rare holds max & Attacker holds max \\
\midrule
No attack                          & 77.96 & 23.75 & --            & 76.7\% & -- \\
Attack, no defense                 & 75.65 & 17.58 & 0.353--0.526  & 37.2\% & 53.8\% \\
Attack, naive cap ($c{=}0.3$)      & 76.95 & 18.42 & 0.317--0.388  & 37.2\% & 53.8\% \\
Attack, water-filling ($c{=}0.3$)  & \textbf{77.58} & \textbf{20.17} & \textbf{0.300} & 37.2\% & 53.8\% \\
\bottomrule
\end{tabular}
}
\end{table}

\subsection{Interpretation}

Two findings follow. First, a weight cap must be designed with its
interaction with renormalization in mind -- a naive clip is
insufficient because renormalization can restore the very excess it
was meant to remove. Second, even a perfectly enforced cap leaves a
residual gap: the attacker holds the buffer's maximum-weight slot in
53.8\% of aggregations regardless of capping, since weight capping
does not alter which clients occupy the buffer. Fully closing this
gap likely requires an occupancy-aware safeguard -- e.g., flagging
clients whose reported rarity is inconsistent with their submitted
updates -- rather than weight capping alone.

\section{Additional Ablations: Buffer Size and Rare-Label Density}
\label{supp:additional}

This appendix reports the two ablations referenced from
Section V-C of the main paper but omitted there for space: the
buffer-size sweep (Section V.C.2) and the rare-label-density
sweep (Section V.C.3).

\subsection{Effect of Buffer Size}
\label{supp:buffer}

Table~\ref{tab:supp_buffer} reports GlobalAcc, AvgRare,
MeanClient, and Worst-10\% for FedFa and FedRAW across
$K \in \{3,5,7,10,11,20\}$ on EMNIST Balanced under correlated
speed, supporting Section~V-C2 of the main paper. FedFa's
rare-label performance generally degrades as the buffer grows,
from 20.92\% AvgRare at $K=3$ to 14.08\% at $K=10$ and 13.92\% at
$K=20$, consistent with the proposed failure mechanism: larger
buffers give frequently arriving common clients more opportunity
to occupy positions, diluting rare-client presence under uniform
aggregation. FedRAW retains substantial rare-label gains at every
tested buffer size.

\begin{table}[ht]
\centering
\caption{Buffer Size Ablation: EMNIST Balanced, Correlated Speed,
$N$=30, Seed~42.}
\label{tab:supp_buffer}
\resizebox{\columnwidth}{!}{
\begin{tabular}{cccccc}
\toprule
\textbf{K} & \textbf{Method}
& \shortstack{\textbf{GlobalAcc}\textbf{(\%)}} 
& \shortstack{\textbf{AvgRare}\textbf{(\%)}} 
& \shortstack{\textbf{MeanClient}\textbf{(\%)}}
& \shortstack{\textbf{Worst-10}\textbf{(\%)}} \\
\midrule
3  & FedFa  & \textbf{76.97} & 20.92 & \textbf{79.07} & \textbf{60.36} \\
   & FedRAW & 68.86 & \textbf{35.08}$\uparrow$ & 70.59 & 55.90 \\
\midrule
5  & FedFa  & \textbf{79.71} & 20.75 & \textbf{82.18} & \textbf{61.80} \\
   & FedRAW & 74.02 & \textbf{30.50}$\uparrow$ & 75.77 & 61.68 \\
\midrule
7  & FedFa  & \textbf{79.43} & 16.25 & \textbf{82.04} & 60.61 \\
   & FedRAW & 76.13 & \textbf{31.17}$\uparrow$ & 77.95 & \textbf{64.20} \\
\midrule
10 & FedFa  & 78.98 & 14.08 & 81.03 & 58.56 \\
   & FedRAW & \textbf{79.06} & \textbf{40.58}$\uparrow$ & \textbf{81.15} & \textbf{65.70} \\
\midrule
11 & FedFa  & \textbf{78.50} & 17.33 & \textbf{81.00} & 60.48 \\
   & FedRAW & 77.44 & \textbf{28.42}$\uparrow$ & 79.45 & \textbf{65.25} \\
\midrule
20 & FedFa  & \textbf{79.16} & 13.92 & \textbf{81.51} & 59.84 \\
   & FedRAW & 78.37 & \textbf{42.17}$\uparrow$ & 80.00 & \textbf{69.30} \\
\bottomrule
\end{tabular}
}
\end{table}

\subsection{Effect of Rare-Label Density}
\label{supp:density}

Table~\ref{tab:supp_density} varies the number of clients holding
each rare label from 1 to 10 on EMNIST Balanced under correlated
speed, supporting Section~V-C3 of the main paper. AvgRare rises
from 18.25\% at one client per rare label to 52.75\% at ten,
tracking the increase in rare-client buffer presence, which
reaches 100\% by seven clients per rare label. RareF1 rises
consistently with AvgRare while MacroF1 stays within 1.5\,pp of
GlobalAcc, indicating that broader rare-label coverage improves
rare-class recovery without degrading common-class quality.

\begin{table}[ht]
\centering
\caption{Effect of Varying no. of clients per rare label.
         EMNIST Balanced, correlated speed, $K{=}10$, seed~42.}
\label{tab:supp_density}
\small
\resizebox{\columnwidth}{!}{%
\begin{tabular}{cccccc}
\toprule
\textbf{Setting} & \textbf{GlobalAcc(\%)} & \textbf{AvgRare(\%)} & \textbf{MacroF1(\%)} & \textbf{RareF1(\%)} & \textbf{BufPres(\%)} \\
\midrule
1  & 76.85 & 18.25 & 75.45 & 28.55 &  45.05 \\
2  & 78.22 & 24.67 & 76.92 & 35.23 &  76.73 \\
3  & 77.36 & 22.83 & 75.82 & 33.81 &  90.33 \\
5  & 78.49 & 37.50 & 77.23 & 47.69 &  99.48 \\
7  & 79.77 & 48.92 & 79.08 & 59.28 & \textbf{100.00} \\
10 & \textbf{79.85} & \textbf{52.75} & \textbf{79.19} 
& \textbf{62.28} & \textbf{100.00} \\
\bottomrule
\end{tabular}
}
\end{table}

\subsection{Seed-wise Analysis of HAM10000 Rare-Label Metrics}

The relatively high variance in the HAM10000 rare-label metrics is due to the
seed-dependent client data partition and the randomness in client update
arrival times. Table~\ref{tab:ham_seedwise} reports the individual results for seeds 42, 123, and
456. Despite this variability, FedRAW consistently outperforms FedFa and
FedBuff across all three seeds, while remaining competitive with CA2FL, which
uses cached client updates to retain the influence of clients that participate
less frequently.

\begin{table}[t]
\centering
\caption{Seed-wise rare-label performance on HAM10000 for seeds 42, 123, and 456.}
\label{tab:ham_seedwise}
\resizebox{\columnwidth}{!}{%
\begin{tabular}{lccc}
\toprule
Algorithm & AvgRare(S-42/123/456) & RareF1(S-42/123/456) & RareF2(S-42/S123/S456) \\
\midrule
FedAvg
& 61.11 / 58.90 / 74.48
& 63.20 / 64.01 / 71.86
& 61.80 / 60.57 / 73.34 \\

q-FedAvg
& 14.81 / 19.02 / 27.08
& 18.38 / 27.32 / 35.71
& 15.98 / 21.63 / 29.84 \\

\midrule

CA2FL
& 27.78 / 51.57 / 56.77
& 38.64 / 53.90 / 57.58
& 31.23 / 52.17 / 57.02 \\

FedBuff
& 1.85 / 1.85 / 34.90
& 3.57 / 3.45 / 41.35
& 2.29 / 2.27 / 37.09 \\

FedFa
& 3.70 / 1.92 / 32.81
& 7.02 / 3.70 / 35.00
& 4.57 / 2.38 / 33.65 \\

\textbf{FedRAW}
& 20.37 / 45.73 / 63.54
& 32.02 / 51.52 / 66.19
& 23.84 / 47.25 / 64.50 \\
\bottomrule
\end{tabular}}
\end{table}

\noindent
Values within each cell correspond to seeds 42, 123, and 456, respectively.

\section{Convergence Analysis of FedRAW}
\label{supp:convergence}

This appendix gives the full convergence analysis summarised in
Section~IV-D of the main paper, including the complete proof of
Theorem~1. The bound separates a vanishing optimization-plus-variance
part from a bounded-staleness constant controlled by the drift
assumption; every leading constant is tracked explicitly, the only
quantities left in order notation being a single $\mathcal{O}(1)$
step-size factor and the standard $\mathcal{O}(\tau_{\max}/T)$
buffered-FL reference-reindexing, both stated where they arise. The
notation follows the main paper: $F_i$ is
client $i$'s local objective, $B_t$ the server-side buffer at
event $t$, $w_i$ the rarity-normalised aggregation weight,
$\pi_i$ the per-event influence, $\bar F$ the influence-weighted
objective, and $\rho$ the bounded weight non-uniformity factor.

\subsection{Setup and Notation}
We minimize $f(\theta)=\frac{1}{m}\sum_{i=1}^{m}F_i(\theta)$, $\theta\in\mathbb{R}^d$,
over $m$ clients, where $F_i$ is client $i$'s local objective. When client $i$
participates, it initializes from the broadcast global model and performs $Q$ local
SGD steps with per-step local rate $\eta_\ell^{(q)}$, producing iterates
$y_{i,0},\dots,y_{i,Q}$ via
$y_{i,q}=y_{i,q-1}-\eta_\ell^{(q-1)}g_i(y_{i,q-1};\zeta_{i,q-1})$, with $g_i$ the
stochastic gradient and $\zeta$ the mini-batch randomness. The server keeps a
size-$K$ buffer $B_t$; $\tau_i(t)$ is the staleness
of client $i$'s update at server step $t$. FedRAW aggregates model parameters as a
pure convex combination (Algorithm~1, i.e.\ an implicit server rate of $1$):
\begin{equation}
\label{eq:agg}
\theta_g^{t+1}=\sum_{i\in B_t}w_i\,\theta_i,
\qquad
w_i=\frac{S_i}{\sum_{j\in B_t}S_j},
\end{equation}
with rarity scores $S_i\ge0$ fixed at initialization. The weights are non-negative
and normalized on every realization,
\begin{equation}
\label{eq:convex}
w_i\ge0,\qquad \sum_{i\in B_t}w_i=1,
\end{equation}
so \eqref{eq:agg} is a convex combination. We write
$\alpha(Q)=\sum_{q=0}^{Q-1}\eta_\ell^{(q)}$,
$\beta(Q)=\sum_{q=0}^{Q-1}(\eta_\ell^{(q)})^2$, and
$\sigma^2=\sigma_\ell^2+\sigma_g^2+G$.

\subsection{Assumptions}
\begin{assumption}[Unbiasedness]\label{a:unb}
$\mathbb{E}_\zeta[g_i(\theta;\zeta)]=\nabla F_i(\theta)$ for all $i,\theta$.
\end{assumption}
\begin{assumption}[Bounded local variance]\label{a:var}
$\mathbb{E}_\zeta\|g_i(\theta;\zeta)-\nabla F_i(\theta)\|^2\le\sigma_\ell^2$.
\end{assumption}
\begin{assumption}[Bounded gradient/heterogeneity]\label{a:bdd}
$\|\nabla F_i(\theta)\|^2\le G$ and
$\frac1m\sum_i\|\nabla F_i(\theta)-\nabla f(\theta)\|^2\le\sigma_g^2$.
\end{assumption}
\begin{assumption}[$L$-smoothness]\label{a:smooth}
$\|\nabla F_i(x)-\nabla F_i(y)\|\le L\|x-y\|$ for all $i,x,y$.
\end{assumption}
\begin{assumption}[Bounded staleness]\label{a:stale}
$1\le\tau_i(t)\le\tau_{\max}$ for all $i,t$.
\end{assumption}
\begin{assumption}[Stationary participation]\label{a:arrival}
The arrival process is stationary: the joint law of $B_t$ is time-invariant, so
$p_i:=\Pr[i\in B_t]$ is independent of $t$. The $p_i$ need not be equal; slower
rare-label clients may have smaller $p_i$. This replaces the uniform-$K$-subset
model and contains it as the special case $p_i\equiv K/m$.
\end{assumption}
\begin{assumption}[Independence]\label{a:indep}
(i)~$B_t$ is independent of the mini-batch noise $\{\zeta_{i,q}\}$.
(ii)~Conditioned on $B_t$, the noises $\xi_i:=g_i-\nabla F_i$ of distinct clients
are zero-mean and mutually independent: $\mathbb{E}[\xi_i^\top\xi_j\mid B_t]=0$
for $i\ne j$.
\end{assumption}
\begin{assumption}[Bounded global drift]\label{a:drift}
There is a constant $\Psi_{\tau}$ such that
$\mathbb{E}\|\theta_g^{t}-\theta_g^{t-\tau}\|^2\le\Psi_{\tau}$ for all $t$ and
$\tau\le\tau_{\max}$, with
$\Psi_{\tau_{\max}}=\mathcal{O}\!\big(Q\tau_{\max}^2\beta(Q)\sigma^2\big)$. This is the
standard buffered-FL drift bound used in the FedBuff and FedFa analyses (see the main
paper references); since the server rate is
$\eta_g=1$ (parameter averaging), it is stated as an assumption rather than derived by
a self-referential recursion, and \eqref{eq:stale} shows the per-step displacement is
consistent with it.
\end{assumption}
\begin{assumption}[Bounded buffer-sampling variance]\label{a:buffer}
The random buffer composition has bounded gradient variance about its influence-weighted
mean: $\mathbb{E}_{B_t}\big\|\sum_{i\in B_t}w_i\nabla F_i(\theta)-\nabla\bar F(\theta)\big\|^2
\le\sigma_B^2$ for all $\theta$, with $\sigma_B^2\le 4\rho\,G$ by
Assumption~\ref{a:bdd}. This bounds the fluctuation of the realised buffered gradient
around the population gradient $\nabla\bar F$ and is carried through the analysis rather
than dropped.
\end{assumption}

\subsection{The Influence-Weighted Objective}
\label{sec:obj}
Define the per-event influence
$\pi_i:=\mathbb{E}[w_i\,\indicator\{i\in B_t\}]$, the expectation under the joint
stationary law of $B_t$ (Assumption~\ref{a:arrival}). Since
$w_i=S_i/\sum_{j\in B_t}S_j$ depends on the entire buffer, it is the joint
law, not merely the marginals $p_i$, that makes $\pi_i$ well defined and
time-invariant. Using \eqref{eq:convex} in every realization,
\begin{equation}
\label{eq:pisum}
\sum_{i=1}^{m}\pi_i
=\mathbb{E}\Big[\sum_{i=1}^{m}w_i\indicator\{i\in B_t\}\Big]
=\mathbb{E}\Big[\sum_{i\in B_t}w_i\Big]=1,
\end{equation}
so $\{\pi_i\}$ forms a probability distribution and
$\bar F:=\sum_i\pi_i F_i$ is $L$-smooth\footnote{A convex combination (weighted average) of L-smooth functions is also L-smooth.}, with $\nabla\bar F=\sum_i\pi_i\nabla F_i$.
Because the scores are fixed and the participation law is stationary, $\bar F$ is a
\emph{fixed} function throughout training; this is what allows the telescoping
argument (Sec.~\ref{sec:tele}) to close against a single time-invariant objective.

Using Assumption~\ref{a:indep}(i), the weighted-gradient identity is
\begin{align}
\mathbb{E}\Big[\sum_{i\in B_t}w_i\nabla F_i(\theta)\Big]
&=\mathbb{E}\Big[\sum_{i=1}^{m}w_i\indicator\{i\in B_t\}\nabla F_i(\theta)\Big]
\label{eq:d1}\\
&=\sum_{i=1}^{m}\mathbb{E}[w_i\indicator\{i\in B_t\}]\nabla F_i(\theta)
\label{eq:d2}\\
&=\sum_{i=1}^{m}\pi_i\nabla F_i(\theta)=\nabla\bar F(\theta),
\label{eq:d3}
\end{align}
where \eqref{eq:d2} holds because $\nabla F_i(\theta)$ is non-random at fixed
$\theta$. Taking expectation over both the buffer law and the noise and using
unbiasedness (Assumption~\ref{a:unb}), the aggregated stochastic gradient is
unbiased for $\nabla\bar F$:
$\mathbb{E}[\sum_{i\in B_t}w_i g_i]=\nabla\bar F$.

\subsection{Perturbed-Iterate Expansion}
\label{sec:pert}
As in the buffered-FL analysis of FedFa/FedBuff, the parameter-average update is
treated as a pseudo-gradient step. Writing
$\theta_i=\theta_g^{t-\tau_i}-\sum_q\eta_\ell^{(q)}g_i(y_{i,q}^{t-\tau_i})$ and using
$\sum_{i\in B_t}w_i=1$, the aggregation \eqref{eq:agg} expands as
\begin{equation}
\label{eq:pert}
\theta_g^{t+1}=\theta_g^t+D^t-\tilde P^t,
\end{equation}
where the two contributions are the \emph{staleness displacement} and the
\emph{pseudo-gradient}
\begin{align}
\label{eq:disp}
D^t&:=\sum_{i\in B_t}w_i\big(\theta_g^{t-\tau_i}-\theta_g^t\big),\\
\label{eq:pgrad}
\tilde P^t&:=\sum_{i\in B_t}w_i\sum_q\eta_\ell^{(q)}g_i(y_{i,q}^{t-\tau_i}),
\end{align}
obtained by splitting each per-client displacement
$\theta_i-\theta_g^t=(\theta_i-\theta_g^{t-\tau_i})+(\theta_g^{t-\tau_i}-\theta_g^t)$
into its local-update and staleness-displacement parts and using
$\sum_{i\in B_t}w_i=1$. Because FedRAW aggregates model parameters directly, there is
no tunable global rate: the server rate is fixed at $\eta_g=1$, exactly as executed by
Algorithm~1. (The buffered-FL literature carries a global rate $\eta_g$; here it is
identically $1$, and we retain no free $\eta_g$ so that the analysed update coincides
with the implemented one. The $K$-dependence in the final rate arises solely from the
weight non-uniformity $\sum_i w_i^2\le\rho/K$, not from a server-rate schedule.)

Since $\bar F$ is $L$-smooth, applying the descent lemma with $x=\theta_g^t$,
$y=\theta_g^{t+1}=\theta_g^t+(D^t-\tilde P^t)$,
\begin{equation}
\label{eq:desc}
\bar F(\theta_g^{t+1})\le\bar F(\theta_g^t)
+\langle\nabla\bar F(\theta_g^t),\,D^t-\tilde P^t\rangle
+\tfrac{L}{2}\big\|D^t-\tilde P^t\big\|^2.
\end{equation}
The staleness displacement $D^t$ is handled by Lemma~\ref{lem:disp}: its inner-product
contribution $\langle\nabla\bar F,D^t\rangle$ and its quadratic contribution (including
the cross term with $\tilde P^t$) enter only the $\rho$-free staleness constant $A$
(Sec.~\ref{sec:tele}). The remaining terms in \eqref{eq:desc} are the descent and
variance carried by the pseudo-gradient $-\tilde P^t$, which we now isolate:
\begin{equation}
\label{eq:descP}
\bar F(\theta_g^{t+1})\le\bar F(\theta_g^t)
\underbrace{-\langle\nabla\bar F(\theta_g^t),\tilde P^t\rangle}_{T_1}
+\underbrace{\tfrac{L}{2}\|\tilde P^t\|^2}_{T_2}
+R_{\mathrm{stale}}^t,
\end{equation}
where $R_{\mathrm{stale}}^t:=\langle\nabla\bar F(\theta_g^t),D^t\rangle
+\tfrac{L}{2}\|D^t\|^2-L\langle D^t,\tilde P^t\rangle$ collects all $D^t$-dependent
terms and is bounded next.

\begin{lemma}[Staleness displacement]\label{lem:disp}
Because FedRAW aggregates parameters rather than pseudo-gradients (unlike FedBuff),
the displacement $D^t=\sum_{i\in B_t}w_i(\theta_g^{t-\tau_i}-\theta_g^t)$ in
\eqref{eq:disp} is nonzero. Under Assumptions~\ref{a:stale} and~\ref{a:drift}, by
Jensen over the convex weights,
\begin{equation}
\label{eq:Dbound}
\mathbb{E}\|D^t\|^2
\le\sum_{i\in B_t}w_i\,\mathbb{E}\|\theta_g^{t-\tau_i}-\theta_g^t\|^2
\le\Psi_{\tau_{\max}},
\end{equation}
with $\Psi_{\tau_{\max}}=\mathcal{O}\!\big(Q\tau_{\max}^2\beta(Q)\sigma^2\big)$ the
bounded-drift constant of Assumption~\ref{a:drift}. Consequently the collected
$D^t$-terms $R_{\mathrm{stale}}^t$ obey, by Young's inequality
$\langle\nabla\bar F,D^t\rangle\le\tfrac{\alpha(Q)}{4}\|\nabla\bar F\|^2
+\tfrac{1}{\alpha(Q)}\|D^t\|^2$ and Cauchy--Schwarz on the cross term
$-L\langle D^t,\tilde P^t\rangle\le\tfrac{L}{2}\|D^t\|^2+\tfrac{L}{2}\|\tilde P^t\|^2$,
\begin{equation}
\label{eq:Rstale}
\mathbb{E}[R_{\mathrm{stale}}^t]\le
\tfrac{\alpha(Q)}{4}\,\mathbb{E}\|\nabla\bar F(\theta_g^t)\|^2
+ C_D\,\Psi_{\tau_{\max}}
+\tfrac{L}{2}\mathbb{E}\|\tilde P^t\|^2,
\end{equation}
for an absolute constant $C_D=\mathcal{O}(1/\alpha(Q)+L)$; the final
$\tfrac{L}{2}\|\tilde P^t\|^2$ merges into $T_2$. The bound is \emph{client-independent}:
the rarity weights enter only through $\sum_{i\in B_t}w_i=1$, so $D^t$ carries
\emph{no} factor $\rho$, and its contribution merges into the $\rho$-free staleness
constant $A$ (Sec.~\ref{sec:tele}). The $\tfrac{\alpha(Q)}{4}\|\nabla\bar F\|^2$ term is
absorbed by the $-\tfrac{\alpha(Q)}{2}\|\nabla\bar F\|^2$ descent term of
\eqref{eq:T1fin}, leaving a net $-\tfrac{\alpha(Q)}{4}\|\nabla\bar F\|^2$. The remaining
$C_D\Psi_{\tau_{\max}}$ contributes a bounded staleness constant to $A$; because the
displacement $D^t$ enters through the linear inner product $\langle\nabla\bar F,D^t\rangle$
(a consequence of parameter averaging, for which $D^t\neq0$), this contribution is
controlled by the drift bound rather than driven to zero by the schedule. It yields the
bounded-staleness neighborhood term in the final rate (Sec.~\ref{sec:cor}), consistent
with standard asynchronous buffered-FL guarantees.
\end{lemma}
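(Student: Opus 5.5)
The statement to prove is Lemma~\ref{lem:disp}, and I would handle it in two stages: first bound $\mathbb{E}\|D^t\|^2$, then bound the collected remainder $R_{\mathrm{stale}}^t$ term by term.

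\textbf{Bounding $\mathbb{E}\|D^t\|^2$.} By \eqref{eq:convex}, the weights $w_i$ are non-negative and sum to one on every realization of $B_t$. So $D^t$ is a convex combination of the vectors $\theta_g^{t-\tau_i}-\theta_g^t$.
\begin{itemize}
\item Conditional on $B_t$, convexity of $\|\cdot\|^2$ (Jensen) gives $\|D^t\|^2\le\sum_{i\in B_t}w_i\|\theta_g^{t-\tau_i}-\theta_g^t\|^2$. This holds pathwise, which yields the first inequality of \eqref{eq:Dbound}.
\item Each summand has $\tau_i\le\tau_{\max}$ by Assumption~\ref{a:stale}. The drift bound of Assumption~\ref{a:drift} then controls it by $\Psi_{\tau_i}\le\Psi_{\tau_{\max}}$, taking $\Psi_\tau$ nondecreasing in $\tau$; otherwise replace $\Psi_{\tau_{\max}}$ by $\max_{\tau\le\tau_{\max}}\Psi_\tau$.
\item Summing against weights that add to one gives $\Psi_{\tau_{\max}}$.
\end{itemize}
The subtle point is that $w_i$ and the index $i\in B_t$ are random and correlated with the iterates. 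Assumption~\ref{a:drift} is stated for deterministic lags, so I would not take the expectation inside the sum naively. Instead I would use $\sum_i w_i\,\|\cdot\|^2\le\max_{\tau\le\tau_{\max}}\|\theta_g^{t-\tau}-\theta_g^t\|^2$. This is crude, but it is exactly what the lemma needs: either the drift assumption is read as covering the maximum over lags, or one pays a factor of at most $\tau_{\max}$ by bounding the max by the sum over lags and absorbing it into the $\mathcal{O}(\cdot)$ of $\Psi_{\tau_{\max}}$. \textbf{This measurability point is the main obstacle.} I would state the chosen reading explicitly.

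\textbf{Bounding $R_{\mathrm{stale}}^t$.} I would treat its three terms separately.
\begin{itemize}
\item \emph{Linear term.} Apply Young's inequality $\langle a,b\rangle\le\frac{c}{2}\|a\|^2+\frac{1}{2c}\|b\|^2$ with $c=\alpha(Q)/2$. This gives $\frac{\alpha(Q)}{4}\|\nabla\bar F\|^2+\frac{1}{\alpha(Q)}\|D^t\|^2$, as stated.
\item \emph{Quadratic term.} Leave $\frac L2\|D^t\|^2$ as is.
\item \emph{Cross term.} Cauchy--Schwarz followed by AM--GM gives $-L\langle D^t,\tilde P^t\rangle\le\frac L2\|D^t\|^2+\frac L2\|\tilde P^t\|^2$.
\end{itemize}
Collecting terms yields $\frac{\alpha(Q)}{4}\|\nabla\bar F\|^2+\big(\frac{1}{\alpha(Q)}+L\big)\|D^t\|^2+\frac L2\|\tilde P^t\|^2$. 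Taking expectations and inserting \eqref{eq:Dbound} gives \eqref{eq:Rstale} with $C_D=1/\alpha(Q)+L$.

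\textbf{Closing remarks.} Throughout, the weights entered only through nonnegativity and $\sum_{i\in B_t}w_i=1$, never through $\sum_i w_i^2$. This justifies the claim that the resulting bound carries no factor $\rho$. The absorption of $\frac{\alpha(Q)}{4}\|\nabla\bar F\|^2$ into the descent term is bookkeeping deferred to \eqref{eq:T1fin}; here I only need to record the coefficient.
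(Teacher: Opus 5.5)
Your argument is the paper's: pathwise Jensen over the convex weights, then the drift assumption, then Young's inequality with parameter $\alpha(Q)/2$, then Cauchy--Schwarz plus AM--GM on the cross term, which collects to $C_D=1/\alpha(Q)+L$.

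Your measurability caveat is well taken. The paper's middle expression in \eqref{eq:Dbound} leaves the random weights outside the expectation. Your first resolution works: read Assumption~\ref{a:drift} as controlling history-dependent lags, for instance $\mathbb{E}\max_{\tau\le\tau_{\max}}\|\theta_g^{t}-\theta_g^{t-\tau}\|^2\le\Psi_{\tau_{\max}}$. This is the strengthening the paper tacitly relies on, so state it explicitly as your interpretation.

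Drop the fallback, though. A factor $\tau_{\max}$ cannot be absorbed into $\mathcal{O}(\Psi_{\tau_{\max}})$, because $\Psi_{\tau_{\max}}=\mathcal{O}(Q\tau_{\max}^2\beta(Q)\sigma^2)$ tracks $\tau_{\max}$ explicitly. Paying it would turn the theorem's bounded-staleness floor from $\mathcal{O}(\sigma^2\tau_{\max}^2)$ into $\mathcal{O}(\sigma^2\tau_{\max}^3)$.
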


\begin{lemma}[Second moment; cf.\ FedBuff Lemma~1]
\label{lem:sm}
Under Assumptions~\ref{a:unb}--\ref{a:bdd},
$\mathbb{E}\|g_i(\theta)\|^2\le3\sigma^2$.
\end{lemma}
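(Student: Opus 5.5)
The plan is the standard bias--variance decomposition of the stochastic gradient. Write $g_i(\theta)=\nabla F_i(\theta)+\xi_i$, with $\xi_i:=g_i(\theta;\zeta)-\nabla F_i(\theta)$. The inequality $\|a+b\|^2\le 2\|a\|^2+2\|b\|^2$ is too crude to land on the constant $3$ with $\sigma^2=\sigma_\ell^2+\sigma_g^2+G$. Instead, expand the square exactly and use unbiasedness (Assumption~\ref{a:unb}), which gives $\mathbb{E}_\zeta[\xi_i]=0$. The cross term $2\langle\nabla F_i(\theta),\mathbb{E}_\zeta\xi_i\rangle$ then vanishes, because $\nabla F_i(\theta)$ is deterministic at fixed $\theta$. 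This yields
\begin{equation*}
\mathbb{E}_\zeta\|g_i(\theta)\|^2=\|\nabla F_i(\theta)\|^2+\mathbb{E}_\zeta\|\xi_i\|^2 .
\end{equation*}

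Next, bound the two terms separately. By Assumption~\ref{a:var}, the second term is at most $\sigma_\ell^2$. By Assumption~\ref{a:bdd}, the first term is at most $G$. This gives $\mathbb{E}\|g_i\|^2\le\sigma_\ell^2+G\le\sigma^2\le3\sigma^2$, since $\sigma_g^2\ge0$. The statement is therefore implied with room to spare.

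Some care is needed if one follows the FedBuff Lemma~1 route, to stay consistent with how the constant $3$ arises there. That route splits $g_i=\xi_i+(\nabla F_i-\nabla f)+\nabla f$ and applies $\|a+b+c\|^2\le3(\|a\|^2+\|b\|^2+\|c\|^2)$. The three pieces are bounded by $\sigma_\ell^2$, by the heterogeneity term (in client average, $\sigma_g^2$), and by $G$. This reproduces $3\sigma^2$ exactly, with $\nabla f$ bounded through Jensen as $\|\nabla f\|^2\le\frac1m\sum_i\|\nabla F_i\|^2\le G$.

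The only subtle point, and the main obstacle, is that Assumption~\ref{a:bdd} bounds heterogeneity only on average over clients, not per client. The three-way split is therefore clean only after averaging over $i$, or under influence weights $\pi_i$. For that reason I would present the direct two-term argument as the proof, since it requires no per-client heterogeneity bound. I would then remark that the FedBuff-style decomposition gives the same $3\sigma^2$ when averaged over clients, which is the form in which the lemma is invoked later.
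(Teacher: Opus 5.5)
Your argument is correct. It uses the same decomposition $g_i=(g_i-\nabla F_i)+\nabla F_i$ as the paper, but handles the square differently. The paper simply applies $\|a+b\|^2\le 2\|a\|^2+2\|b\|^2$, obtaining $\mathbb{E}\|g_i\|^2\le 2\sigma_\ell^2+2G\le 2\sigma^2\le 3\sigma^2$. So your opening claim that this inequality is ``too crude to land on the constant $3$'' is mistaken: it lands below $3\sigma^2$, and that is exactly how the paper proves the lemma.

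Your exact expansion instead uses unbiasedness to kill the cross term, and yields the sharper bound $\sigma_\ell^2+G\le\sigma^2$. The price is that it needs the zero-mean property of the noise. The paper's inequality never needs the cross term to vanish: it only uses that the squared deviation from $\nabla F_i$ is bounded by $\sigma_\ell^2$. The slack in the constant $3$ makes the sharper bound unnecessary here. Both versions rely only on the per-client bound $\|\nabla F_i\|^2\le G$, which the paper also stresses.

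One correction to your final remark: the lemma is \emph{not} invoked later in client-averaged form. In the staleness-distance and local-drift bounds it is applied to each buffered client $j$ individually. The resulting client-independent constant is then averaged with the non-uniform rarity weights $w_j$, so a FedBuff-style bound valid only for the uniform average over clients would not suffice there. Nor does that bound carry over to $\pi_i$-weighted averages. $\sigma_g^2$ controls only $\frac1m\sum_i\|\nabla F_i-\nabla f\|^2$, and a $\pi$-weighted average of the same terms is in general bounded only by $m\,(\max_i\pi_i)\,\sigma_g^2$. Presenting the per-client two-term argument as the proof is therefore the right choice.
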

\begin{proof}
Decompose $g_i=(g_i-\nabla F_i)+\nabla F_i$ and apply
$\|a+b\|^2\le2\|a\|^2+2\|b\|^2$. The two expectations are bounded per client by
$\mathbb{E}\|g_i-\nabla F_i\|^2\le\sigma_\ell^2$ (Assumption~\ref{a:var}) and
$\|\nabla F_i\|^2\le G$ (the per-client bound of Assumption~\ref{a:bdd}), giving
$\mathbb{E}\|g_i\|^2\le2\sigma_\ell^2+2G\le3\sigma^2$ since
$\sigma^2=\sigma_\ell^2+\sigma_g^2+G\ge\sigma_\ell^2+G$. This uses only the
\emph{per-client} gradient bound $\|\nabla F_i\|^2\le G$, not the averaged
heterogeneity $\sigma_g^2$, which is stated only in expectation over clients and
would not be valid for an individual buffered client.
\end{proof}

\subsection{Bounding $T_1$}
\label{sec:T1}
Taking $\mathbb{E}=\mathbb{E}_H\mathbb{E}_{\zeta\mid H}$ (Assumption~\ref{a:indep}(i))
and using unbiasedness,
\begin{equation}
\label{eq:T1mid}
\mathbb{E}[T_1]=-\mathbb{E}_H\!\Big[\sum_q\eta_\ell^{(q)}
\big\langle\nabla\bar F(\theta_g^t),\!\sum_{i\in B_t}\!w_i\nabla F_i(y_{i,q}^{t-\tau_i})\big\rangle\Big].
\end{equation}
The inner term is a linear function of the random buffer, so we may take the buffer
expectation \emph{first}. Decompose, at fixed $q$,
\begin{equation}
\label{eq:bufsplit}
\begin{aligned}
\sum_{i\in B_t}w_i\nabla F_i(y_{i,q}^{t-\tau_i})
&=\nabla\bar F(\theta_g^t)+u^t+r_q^t,\\[2pt]
u^t&:=\sum_{i\in B_t}w_i\nabla F_i(\theta_g^t)-\nabla\bar F(\theta_g^t),\\[2pt]
r_q^t&:=\sum_{i\in B_t}w_i\big(\nabla F_i(y_{i,q}^{t-\tau_i})-\nabla F_i(\theta_g^t)\big).
\end{aligned}
\end{equation}
Here $u^t$ is the mean-zero buffer fluctuation ($\mathbb{E}_H u^t=0$) and $r_q^t$ the
staleness-and-drift residual.
By \eqref{eq:d3}, $\mathbb{E}_H[\sum_{i\in B_t}w_i\nabla F_i(\theta_g^t)]=\nabla\bar F(\theta_g^t)$,
so $\mathbb{E}_H u^t=0$: the fluctuation is exactly mean-zero and contributes
\emph{nothing} to the linear term $T_1$ (its variance is instead carried by $T_2$,
Sec.~\ref{sec:T2}). Hence
\begin{equation}
\label{eq:T1expand}
\begin{aligned}
\mathbb{E}[T_1]=&-\alpha(Q)\|\nabla\bar F(\theta_g^t)\|^2\\
&-\sum_q\eta_\ell^{(q)}\,\mathbb{E}_H\big\langle\nabla\bar F(\theta_g^t),\,r_q^t\big\rangle .
\end{aligned}
\end{equation}
Applying $\langle a,b\rangle=\tfrac12\|a\|^2+\tfrac12\|b\|^2-\tfrac12\|a-b\|^2$ to the
residual inner product with $a=\nabla\bar F(\theta_g^t)$, $b=-r_q^t$ and discarding the
non-positive $-\tfrac12\|b\|^2$ term,
\begin{equation}
\label{eq:T1fin}
\mathbb{E}[T_1]\le
-\tfrac{\alpha(Q)}{2}\mathbb{E}\|\nabla\bar F(\theta_g^t)\|^2
+\tfrac{1}{2}\sum_q\eta_\ell^{(q)}\mathbb{E}[T_3],
\end{equation}
where now
$T_3:=\big\|r_q^t\big\|^2
=\big\|\sum_{i\in B_t}w_i\big(\nabla F_i(\theta_g^t)-\nabla F_i(y_{i,q}^{t-\tau_i})\big)\big\|^2$
is a pure staleness-and-drift residual. Taking $\mathbb{E}_H$ before
polarization means $T_3$ no longer contains the buffer-to-population reduction: the
mean-zero fluctuation $u^t$ has been separated out and does \emph{not} appear inside
this squared norm. The Jensen step \eqref{eq:split} is therefore applied to
$\sum_{i\in B_t}w_i\big(\nabla F_i(\theta_g^t)-\nabla F_i(y_{i,q}^{t-\tau_i})\big)$, so
no factor $\rho$ enters $T_3$.

\subsection{Bounding $T_3$: Staleness and Drift}
\label{sec:T3}
By Jensen over the convex weights,
$\|\sum_{i\in B_t}w_i v_i\|^2\le\sum_{i\in B_t}w_i\|v_i\|^2$ with
$v_i=\nabla F_i(\theta_g^t)-\nabla F_i(y_{i,q}^{t-\tau_i})$. Inserting the midpoint
$\nabla F_i(\theta_g^{t-\tau_i})$ and using $\|a+b\|^2\le2\|a\|^2+2\|b\|^2$ then
$L$-smoothness,
\begin{equation}
\label{eq:split}
\|v_i\|^2\le
2L^2\|\theta_g^{t}-\theta_g^{t-\tau_i}\|^2
+2L^2\|\theta_g^{t-\tau_i}-y_{i,q}^{t-\tau_i}\|^2.
\end{equation}
The first term is the staleness displacement (the same quantity as in
\eqref{eq:disp}); the second is local drift.

\emph{Staleness distance.} Let $s=t-\tau_i$. Each intermediate update is
$\theta_g^{\rho+1}-\theta_g^{\rho}=D^\rho-\tilde P^\rho$ (with $\eta_g=1$), so
$\theta_g^{t}-\theta_g^{s}=\sum_{\rho=s}^{t-1}(D^\rho-\tilde P^\rho)$, with
$t-s=\tau_i\le\tau_{\max}$ terms. The pseudo-gradient part
$-\sum_{\rho}\tilde P^\rho=\sum_{\rho=s}^{t-1}\sum_{j\in B_\rho}w_j^{(\rho)}(-\sum_q\eta_\ell^{(q)}g_j)$
is bounded directly in steps (a)--(d) below; the displacement part
$\sum_\rho D^\rho$ is itself a staleness quantity, and rather than expanding it into a
self-referential recursion (which does not close at $\eta_g=1$) we bound the full
distance by the drift constant of Assumption~\ref{a:drift}. Steps (a)--(d) verify that
the pseudo-gradient contribution already has the order $\Psi_{\tau_{\max}}$ asserted
there.

\emph{(a)} Cauchy--Schwarz over $\le\tau_{\max}$ terms (pseudo-gradient part):
\begin{equation}
\label{eq:sa}
\Big\|\sum_{\rho=s}^{t-1}\tilde P^\rho\Big\|^2\le
\tau_{\max}\!\sum_{\rho=s}^{t-1}\!
\Big\|\sum_{j\in B_\rho}w_j^{(\rho)}\sum_q\eta_\ell^{(q)}g_j\Big\|^2.
\end{equation}
\emph{(b)} Jensen over convex weights ($\sum_j w_j^{(\rho)}=1$):
\begin{equation}
\label{eq:sb}
\Big\|\sum_{j\in B_\rho}w_j^{(\rho)}\sum_q\eta_\ell^{(q)}g_j\Big\|^2
\le\sum_{j\in B_\rho}w_j^{(\rho)}\Big\|\sum_q\eta_\ell^{(q)}g_j\Big\|^2.
\end{equation}
\emph{(c)} Cauchy--Schwarz over $Q$ steps, then Lemma~\ref{lem:sm}:
\begin{equation}
\label{eq:sc}
\mathbb{E}\Big\|\sum_{q=0}^{Q-1}\eta_\ell^{(q)}g_j\Big\|^2
\le Q\sum_q(\eta_\ell^{(q)})^2\cdot3\sigma^2=3Q\beta(Q)\sigma^2.
\end{equation}
\emph{(d)} The client-independent bound factors out of $\sum_j w_j^{(\rho)}=1$, and
there are $\le\tau_{\max}$ terms in \eqref{eq:sa}:
\begin{equation}
\label{eq:stale}
\mathbb{E}\|\theta_g^{t}-\theta_g^{t-\tau_i}\|^2
\le 3Q\tau_{\max}^2\beta(Q)\sigma^2
=:\Psi_{\tau_{\max}},
\end{equation}
using $\eta_g=1$; the two $\tau_{\max}$ factors come from step (a) and the term count
in (d). This is precisely the drift constant $\Psi_{\tau_{\max}}$ of
Assumption~\ref{a:drift}, confirming its stated order.

\emph{Local drift.} Similarly
$\mathbb{E}\|\theta_g^{t-\tau_i}-y_{i,q}^{t-\tau_i}\|^2\le3Q\beta(Q)\sigma^2$.

\emph{Assembling.} With $\sum_{i\in B_t}w_i=1$ the convex weights sum out of the
client-independent bound, so
\begin{equation}
\label{eq:T3bound}
\mathbb{E}[T_3]\le6L^2Q\beta(Q)\big(\tau_{\max}^2+1\big)\sigma^2.
\end{equation}

\begin{remark}
The weights \emph{cancel} in $T_3$: after Jensen, each client is bounded by the same
constant, and $\sum_{i\in B_t}w_i=1$ collapses the convex average of that constant
back to the constant. No factor $\rho$ enters the staleness/drift term; only $T_2$
carries $\rho$.
\end{remark}

\subsection{Bounding $T_2$: Variance and $\rho$}
\label{sec:T2}
Recall $T_2=\tfrac{L}{2}\|\tilde P^t\|^2$ with $\tilde P^t=\sum_{i\in B_t}\sum_q
\eta_\ell^{(q)}w_i g_i$. Split $g_i=\nabla F_i+\xi_i$. By Assumption~\ref{a:indep}(ii)
the noise cross terms vanish, and the mean part further splits about $\nabla\bar F$ via
the buffer fluctuation $u^t$ of \eqref{eq:bufsplit}:
\begin{align}
\label{eq:T2sep}
\mathbb{E}[T_2]=&
\underbrace{\tfrac{L}{2}\mathbb{E}\!\Big[\!\sum_{i\in B_t}\!w_i^2\!\sum_q(\eta_\ell^{(q)})^2\|\xi_i\|^2\Big]}_{\text{noise part}}
+\underbrace{\tfrac{L}{2}\,\beta(Q)\,\mathbb{E}\|u^t\|^2}_{\text{buffer-variance part}}\notag\\
&+\underbrace{\tfrac{L}{2}\mathbb{E}\Big\|\!\sum_{i\in B_t}\!\sum_q\eta_\ell^{(q)}w_i\nabla F_i(\theta_g^t)\Big\|^2}_{T_5}.
\end{align}
The client index carries $w_i^2$ (squared) because the weight appears twice in the
expanded norm after cross terms are removed. With
$\mathbb{E}\|\xi_i\|^2\le\sigma_\ell^2$,
\begin{equation}
\label{eq:noise}
\text{noise part}\le\tfrac{L}{2}\beta(Q)\sigma_\ell^2\,
\mathbb{E}\Big[\sum_{i\in B_t}w_i^2\Big].
\end{equation}
Let $w_{\mathrm{sq}}:=\sup_t\sum_{i\in B_t}w_i^2\ge1/K$ (Cauchy--Schwarz, equality
iff uniform). \emph{Only now} define $\rho:=Kw_{\mathrm{sq}}\in[1,K]$; then
$\sum_{i\in B_t}w_i^2\le\rho/K$, giving
\begin{equation}
\label{eq:noisefin}
\text{noise part}\le\rho\cdot\frac{L\beta(Q)\sigma_\ell^2}{2K}.
\end{equation}
The buffer-variance part is the fluctuation dropped in the naive reduction; here it is
retained. By Assumption~\ref{a:buffer}, $\mathbb{E}\|u^t\|^2\le\sigma_B^2\le4\rho G$, so
\begin{equation}
\label{eq:buffervar}
\text{buffer-variance part}\le\tfrac{L}{2}\beta(Q)\sigma_B^2
\le 2\rho\,L\,\beta(Q)\,G .
\end{equation}
Because it is multiplied by $\beta(Q)=\mathcal{O}(1/T)$, this term vanishes at
$\mathcal{O}(1/T)$ and carries the same bounded factor $\rho$ (through
$\sigma_B^2\le4\rho G$). Critically, taking $\mathbb{E}_H$ first in Sec.~\ref{sec:T1}
placed the buffer fluctuation \emph{here}, in the $\beta(Q)$-weighted $T_2$, rather than
in the linear $T_1/T_3$ channel where it would have telescoped into a non-vanishing
floor.

Finally, the cross-term bound of Lemma~\ref{lem:disp} contributes an additional
$\tfrac{L}{2}\|\tilde P^t\|^2$ that merges here; together with the $\tfrac{L}{2}\|\tilde P^t\|^2$
of $T_2$ itself, the total quadratic-in-$\tilde P^t$ coefficient is $L$ (not $\tfrac{L}{2}$).
This doubling is carried explicitly into the variance constant $B'$ of \eqref{eq:AB}
(coefficient $L$); it affects the constant only, not the $\mathcal{O}(1/T)$ order or the
$\rho$-scaling.

\begin{remark}
$\sum_i w_i^2$ arises \emph{only} in the noise part, because independent zero-mean
noises combine through their squared coefficients. It does not appear in $T_3$,
where the weights entered a convex average of a client-independent constant. Uniform
weights give $\sum_i w_i^2=1/K$, hence $\rho=1$. The quantity $\rho/K=w_{\mathrm{sq}}$
is a property of the rarity weights, not of uniform averaging.
\end{remark}

\subsection{Mean-Gradient Term and the Step-Size Condition}
\label{sec:cancel}
The mean part $T_5$ of \eqref{eq:T2sep} is evaluated at the genuine iterate
$\theta_g^t$; write $Z^t:=\sum_{i\in B_t}w_i\nabla F_i(\theta_g^t)$, so that
$\mathbb{E}_H Z^t=\nabla\bar F(\theta_g^t)$ and, by \eqref{eq:bufsplit},
$\mathbb{E}_H\|Z^t\|^2=\|\nabla\bar F(\theta_g^t)\|^2+\mathbb{E}_H\|u^t\|^2$. By
Cauchy--Schwarz over the $Q$ steps,
\begin{equation}
\label{eq:T5}
T_5\le\tfrac{L}{2}\alpha(Q)^2\,\mathbb{E}\|Z^t\|^2
\le\tfrac{L}{2}\alpha(Q)^2\big(\mathbb{E}\|\nabla\bar F(\theta_g^t)\|^2+\sigma_B^2\big),
\end{equation}
using Assumption~\ref{a:buffer} for the second inequality. The
$\|\nabla\bar F\|^2$ contribution combines with the descent term
$-\tfrac{\alpha(Q)}{2}\|\nabla\bar F\|^2$ of \eqref{eq:T1fin}: their sum is
$-\tfrac{\alpha(Q)}{2}\big(1-L\alpha(Q)\big)\|\nabla\bar F\|^2\le0$ under the step-size
condition $\eta_g\eta_\ell^{(q)}Q\le1/(4L)$ (which with $\eta_g=1$ gives
$L\alpha(Q)\le L Q\max_q\eta_\ell^{(q)}\le\tfrac14$). The residual
$\tfrac{L}{2}\alpha(Q)^2\sigma_B^2=\mathcal{O}(\alpha(Q)^2\rho G)$ is a bounded,
$\rho$-scaled additive term of the same order as the variance term (both
$\mathcal{O}(1/T)$ under the diminishing schedule) and is folded into $\rho B'$.

\subsection{Per-Step Inequality and Telescoping}
\label{sec:tele}
Collecting \eqref{eq:descP}: the descent term $-\tfrac{\alpha(Q)}{2}\|\nabla\bar F\|^2$
from \eqref{eq:T1fin}, the staleness residual \eqref{eq:Rstale} (which returns
$+\tfrac{\alpha(Q)}{4}\|\nabla\bar F\|^2$ and the $\rho$-free constant
$C_D\Psi_{\tau_{\max}}$), the $T_3$ bound \eqref{eq:T3bound}, the noise and
buffer-variance parts \eqref{eq:noisefin}--\eqref{eq:buffervar}, and the mean-part
residual \eqref{eq:T5}. The $\|\nabla\bar F\|^2$ coefficients net to
$-\tfrac{\alpha(Q)}{2}+\tfrac{\alpha(Q)}{4}+\tfrac{L}{2}\alpha(Q)^2
=-\tfrac{\alpha(Q)}{4}+\tfrac{L}{2}\alpha(Q)^2
\le-\tfrac{\alpha(Q)}{8}$ under $L\alpha(Q)\le\tfrac14$ (implied by the step-size
condition for the tracked schedule), since then
$\tfrac{L}{2}\alpha(Q)^2\le\tfrac{\alpha(Q)}{8}$. Writing
$\sum_q\eta_\ell^{(q)}=\alpha(Q)$,
\begin{equation}
\label{eq:onestep}
\mathbb{E}[\bar F(\theta_g^{t+1})]\le
\mathbb{E}[\bar F(\theta_g^{t})]
-\tfrac{\alpha(Q)}{8}\mathbb{E}\|\nabla\bar F(\theta_g^t)\|^2
+ \alpha(Q)\,(A+\rho B'),
\end{equation}
with the $\rho$-free staleness constant and the $\rho$-scaled variance constant
\begin{equation}
\label{eq:AB}
\begin{aligned}
A&:=3L^2Q\beta(Q)(\tau_{\max}^2+1)\sigma^2+\tfrac{C_D\Psi_{\tau_{\max}}}{\alpha(Q)},\\
B'&:=L\,\tfrac{\beta(Q)}{\alpha(Q)}\Big(\tfrac{\sigma_\ell^2}{K}+4G\Big),
\end{aligned}
\end{equation}
where the $4G$ in $B'$ collects the buffer-variance and mean-residual contributions
($\sigma_B^2\le4\rho G$), both $\rho$-scaled and vanishing. Telescoping the function
values,
$\sum_{t=0}^{T-1}(\mathbb{E}[\bar F(\theta_g^t)]-\mathbb{E}[\bar F(\theta_g^{t+1})])
\le\bar F(\theta_g^0)-\inf_\theta\bar F=:\bar F^\star$, finite because $\bar F$ is a
fixed function (up to the standard $\mathcal{O}(\tau_{\max}/T)$ reference-reindexing of
the buffered-FL reduction, dominated by the staleness term under $\tau_{\max}\ll T$).
Summing \eqref{eq:onestep} and dividing by $\tfrac{\alpha(Q)}{8}T$,
\begin{equation}
\label{eq:final}
\frac1T\sum_{t=0}^{T-1}\mathbb{E}\|\nabla\bar F(\theta_g^t)\|^2
\le\frac{8\bar F^\star}{\alpha(Q)T}+8A+8\rho B'.
\end{equation}
The variance term and its $\rho$-scaling are exact; the staleness constant is of the
stated order and $\rho$-free. Absorbing the fixed factor $8$ into the $\mathcal{O}$
constants of the corollary recovers the rate stated as Theorem~1
(equivalently, tracking the step-size condition $\eta_g\eta_\ell^{(q)}Q\le1/(4L)$ with the
tighter constant $L\alpha(Q)\le\tfrac14$ absorbs the factor exactly). $\blacksquare$

\subsection{Diminishing-Rate Corollary}
\label{sec:cor}
With $\eta_g=1$ and $\eta_\ell=\mathcal{O}(1/\sqrt{TQ})$, one has
$\alpha(Q)=\mathcal{O}(\sqrt{Q/T})$ and $\beta(Q)=\mathcal{O}(1/T)$.
Substituting into \eqref{eq:final} (using \eqref{eq:AB}) gives
\begin{align}
\text{opt term} \;&=\; \tfrac{8\bar F^\star}{\alpha(Q)T}
=\mathcal{O}\!\big(\bar F^\star/\sqrt{TQ}\big),\\
\text{var term} \;&=\; 8\rho B'
=\mathcal{O}\!\big(\rho\sigma_\ell^2/(K\sqrt{TQ})\big)+\mathcal{O}\!\big(\rho G/\sqrt{TQ}\big),\\
\text{stale term}\;&=\; 8A
=\underbrace{\mathcal{O}\!\big(L^2Q\sigma^2\tau_{\max}^2/T\big)}_{\text{vanishing}}
+\underbrace{\mathcal{O}\!\big(\sigma^2\tau_{\max}^2\big)}_{\text{bounded-drift floor}}.
\end{align}
The optimization and variance terms vanish as $T\to\infty$; the staleness term
converges to a bounded neighborhood whose radius is controlled by the drift constant
$\Psi_{\tau_{\max}}$ (Assumption~\ref{a:drift}). This bounded-staleness guarantee is the
standard form for asynchronous buffered FL~--- the irreducible $\mathcal{O}(\sigma^2\tau_{\max}^2)$
term reflects the lag inherent to arrival-driven aggregation and is shared by the
buffered baselines (FedFa/FedBuff); it does not depend on the rarity weighting (no
factor $\rho$). The optimization term matches FedFa/FedBuff; the
variance term carries the additional bounded factor $\rho$ (both the local-noise
$\sigma_\ell^2/K$ part and the buffer-sampling $\sigma_B^2\le4\rho G$ part); the
staleness term carries no $\rho$. The only weight-dependent quantity is
$\sum_{i\in B_t}w_i^2\le\rho/K$: the residual $1/K$ in the variance term is
\emph{structural} (from Cauchy--Schwarz on the weights), not an artefact of the
learning-rate schedule. With $\eta_g=1$ there is no $\eta_g\propto K$ schedule, so the
aggregation itself introduces no $K$-dependence.

\subsection{Consistency with the Uniform Case}
\label{sec:consist}
Reduction to the uniform objective $f$ requires \emph{both} uniform weights
\emph{and} uniform participation. With uniform weights $w_i=1/K$, the per-event
influence is $\pi_i=\mathbb{E}[\tfrac1K\indicator\{i\in B_t\}]=p_i/K$; this equals
$1/m$ only under the further condition $p_i\equiv K/m$ (the uniform-$K$-subset model
assumed by FedFa/FedBuff), which then gives $\bar F=f$, and
$\sum_{i\in B_t}w_i^2=1/K$ so $\rho=1$. Under these two conditions every term of
\eqref{eq:final} reduces exactly to the FedFa/FedBuff bound with
$\nabla\bar F=\nabla f$, so the Theorem strictly generalizes the uniform result.

Under \emph{non-uniform} participation, however, uniform weights alone give
$\pi_i=p_i/K\ne1/m$: even standard asynchronous aggregation then descends on a
participation-skewed objective $\bar F\ne f$, not on $f$. This is exactly the bias
FedRAW is designed to counteract --- its rarity weighting raises $\pi_i$ for slow
rare-label clients, reshaping $\bar F$ toward rare-label coverage rather than leaving
it skewed by arrival speed. Thus the influence-weighted objective is not an artefact
of the analysis but the quantity the method deliberately controls; under near-uniform
coverage ($\rho\to1$, $\bar F\to f$) FedRAW behaves like FedFa.

\subsection{Role of Deduplication}
\label{sec:dedup}
Deduplication adds no term to \eqref{eq:final}: it preserves the convex-combination
aggregator ($|B_t|=K$, $\sum_{i\in B_t}w_i=1$), so every step holds unchanged. Its
effect is on the participation law: by preventing repeated occupation it raises the
presence probabilities $p_i$ of slow rare-label clients, increasing the probability
that such clients are represented in the buffer and thereby reshaping the
participation-weighted objective $\bar F$ toward rare-label coverage. It thus acts on
the objective, not the rate constants.


\begin{thebibliography}{00}

\bibitem{cohen2017emnist}
G.\ Cohen, S.\ Afshar, J.\ Tapson, and A.\ van Schaik,
``EMNIST: Extending MNIST to handwritten letters,''
in \textit{Proc.\ Int.\ Joint Conf.\ Neural Networks (IJCNN)},
2017, pp.\ 2921--2926.

\bibitem{krizhevsky2009cifar}
A.\ Krizhevsky,
``Learning multiple layers of features from tiny images,''
Tech.\ Rep., University of Toronto, 2009.

\bibitem{deng2009imagenet}
J.\ Deng, W.\ Dong, R.\ Socher, L.-J.\ Li, K.\ Li,
and L.\ Fei-Fei,
``ImageNet: A large-scale hierarchical image database,''
in \textit{Proc.\ IEEE Conf.\ Computer Vision and Pattern
Recognition (CVPR)}, 2009, pp.\ 248--255.

\bibitem{he2016resnet}
K.\ He, X.\ Zhang, S.\ Ren, and J.\ Sun,
``Deep residual learning for image recognition,''
in \textit{Proc.\ IEEE Conf.\ Computer Vision and Pattern
Recognition (CVPR)}, 2016, pp.\ 770--778.


\bibitem{pati2022federated}
S.\ Pati, U.\ Baid, B.\ Edwards, M.\ Sheller,
S.-H.\ Wang, G.\ A.\ Reina, P.\ Foley, A.\ Gruzdev,
D.\ Karkada, C.\ Davatzikos et al.,
``Federated learning enables big data for rare cancer
boundary detection,''
\textit{Nature Communications}, vol.\ 13, p.\ 7346, 2022,
doi: 10.1038/s41467-022-33407-5.
 
\bibitem{mcmahan2017} B.\ McMahan, E.\ Moore, D.\ Ramage, S.\ Hampson,
and B.\ A.\ y Arcas, ``Communication-efficient learning of deep networks
from decentralized data,'' in \textit{Proc.\ AISTATS}, 2017, pp.\
1273--1282.
 
\bibitem{nguyen2022} Q.\ Nguyen, P.\ Pham, H.\ T.\ Wai, P.\ Richtarik,
K.\ Mubarak, and M.\ Debbah, ``FedBuff: Buffered asynchronous
distributed optimization,'' in \textit{Proc.\ AISTATS}, 2022.
 
\bibitem{yi2024} X.\ Yi et al., ``FedFa: A fully asynchronous training
paradigm for federated learning,'' in \textit{Proc.\ IJCAI}, 2024.
 
\bibitem{hsu2019} T.-M.\ Hsu, H.\ Qi, and M.\ Brown, ``Measuring the
effects of non-identical data distribution for federated visual
classification,'' \textit{arXiv preprint arXiv:1909.06335}, 2019.
 
\bibitem{li2020qfed} T.\ Li, M.\ Sanjabi, A.\ Beirami, and V.\ Smith,
``Fair resource allocation in federated learning,'' in \textit{Proc.\
ICLR}, 2020.
 
\bibitem{ezzeldin2023} Y.\ H.\ Ezzeldin, S.\ Yan, C.\ Chen,
D.\ Ferrara, and S.\ Avestimehr, ``FairFed: Enabling group fairness in
federated learning,'' in \textit{Proc.\ AAAI}, 2023.

\bibitem{gao2026fedweight}
S.~Gao, C.~Huang, J.~Zhu, X.~Ma, and J.~Dong,
``Density-aware adaptive weight optimization against data
heterogeneity in federated learning,''
\textit{IEEE Trans.\ Artif.\ Intell.}, 2026,
doi: 10.1109/TAI.2026.3704522.

\bibitem{duan2019} M.\ Duan et al., ``Astraea: Self-balancing federated
learning for improving classification accuracy of mobile deep learning
applications,'' in \textit{Proc.\ IEEE ICDCS}, 2019.
 
\bibitem{shuai2022} X.\ Shuai, Y.\ Shen, S.\ Jiang, Z.\ Zhao,
Z.\ Yan, and G.\ Xing, ``BalanceFL: Addressing class imbalance in
long-tail federated learning,'' in \textit{Proc.\ ACM/IEEE IPSN}, 2022,
pp.\ 271--284.
 
\bibitem{shang2022} X.\ Shang, Y.\ Lu, G.\ Huang, and H.\ Wang,
``Federated learning on heterogeneous and long-tailed data via
classifier re-training with federated features,'' in \textit{Proc.\
IJCAI}, 2022.

\bibitem{syncfed2025} B.\ C.\ G\"{u}l, S.\ Tziampazis, N.\ Jazdi, and
M.\ Weyrich, ``SyncFed: Time-aware federated learning through explicit
timestamping and synchronization,'' in \textit{Proc.\ IEEE ETFA}, 2025,
arXiv:2506.09660.
 
\bibitem{afbs2025} C.\ Lu, Y.\ Sun, J.\ Chen, Z.\ Yang, J.\ Pan, and
J.\ Zhu, ``AFBS: Buffer gradient selection in semi-asynchronous
federated learning,'' \textit{arXiv preprint arXiv:2506.12754}, 2025.
 
\bibitem{tschandl2018} P.\ Tschandl, C.\ Rosendahl, and H.\ Kittler,
``The HAM10000 dataset, a large collection of multi-source dermatoscopic
images of common pigmented skin lesions,'' \textit{Scientific Data},
vol.\ 5, 2018.

\bibitem{luo2022fedsld} J.\ Luo and S.\ Wu, ``FEDSLD: Federated learning with shared label distribution for medical image classification,'' in \textit{Proc. IEEE Int. Symp. Biomed. Imaging (ISBI)}, 2022, doi: 10.1109/isbi52829.2022.9761404.

\bibitem{isic2019} V.\ Rotemberg, N.\ Kurtansky, B.\ Betz-Stablein,
L.\ Caffery, E.\ Chousakos, N.\ Codella et al., ``A patient-centric
dataset of images and metadata for identifying melanomas using clinical
context,'' \textit{Sci.\ Data}, vol.~8, no.~1, p.~34, 2021,
doi: 10.1038/s41597-021-00815-z.

\bibitem{wang2024ca2fl}
Y.~Wang, Y.~Cao, J.~Wu, R.~Chen, and J.~Chen,
``Tackling the data heterogeneity in asynchronous federated learning
with cached update calibration,''
in \textit{Proc. Int. Conf. Learn. Represent. (ICLR)},
Vienna, Austria, May 2024.
[Online]. Available: \url{https://openreview.net/forum?id=4aywmeb97I}
 
\end{thebibliography}
\end{document}